\newtheorem{thm}{Theorem}
\newtheorem{lemma}[thm]{Lemma} 
\theoremstyle{definition} 
\newtheorem{defn}{Definition}
\newcommand{\norm}[1]{\lVert#1\rVert}
\newcommand{\abs}[1]{\left\lvert#1\right\rvert}
\newcommand{\tc}[1]{\left\langle#1\right\rangle} 
\newcommand\bx{\bm{x}}
\newcommand\bc{\bm{c}}
\newcommand\by{\bm{y}}
\newcommand\bp{\bm{p}}
\newcommand\be{\bm{e}}
\newcommand\bz{\bm{z}}
\newcommand\bv{\bm{v}}
\newcommand\bh{\bm{h}}
\newcommand\bw{\bm{w}}
\newcommand\bu{\bm{u}}
\newcommand\bW{\mathbf{W}}
\newcommand\bK{\mathbf{K}}
\newcommand\bQ{\mathbf{Q}}
\newcommand\bV{\mathbf{V}}
\DeclareMathOperator{\TF}{TF}
\DeclareMathOperator{\Unif}{Unif}
\DeclareMathOperator{\RR}{\mathbb{R}}
\DeclareMathOperator{\NN}{\mathbb{N}}
\newcommand\sm{\mathsf{softmax}}
\newcommand{\B}[1]{\boldsymbol{#1}_n}
\theoremstyle{plain}
\theoremstyle{definition}
\theoremstyle{remark}
\newtcolorbox{databox}[1][]{
  colback=white,
  colframe=black,
  coltitle=white,
  fonttitle=\bfseries,
  title=#1,
  sharp corners,
  boxrule=0.8pt,
  left=5pt, right=5pt, top=5pt, bottom=5pt
}
\begin{document}

\twocolumn[
   \icmltitle{Chain Of Thought Compression: A Theoretical Analysis}




  \icmlsetsymbol{equal}{*}

  \begin{icmlauthorlist}
    \icmlauthor{Juncai Li}{sxu}
    \icmlauthor{Ru Li}{sxu}
    \icmlauthor{Yuxiang Zhou}{queen}
    \icmlauthor{Boxiang Ma}{sxu}
    \icmlauthor{Jeff Z. Pan}{edb}
  \end{icmlauthorlist}

  \icmlaffiliation{sxu}{School of Computer and Information Technology, Shanxi University, Taiyuan, Shanxi, China}
  \icmlaffiliation{queen}{Queen Mary, University of London, UK}
  \icmlaffiliation{edb}{School of Informatics, University of Edinburgh, UK}

  \icmlcorrespondingauthor{Ru Li}{liru@sxu.edu.cn}

  \icmlkeywords{Machine Learning, ICML}

  \vskip 0.3in
]



\printAffiliationsAndNotice{}  

\begin{abstract}

  Chain-of-Thought (CoT) has unlocked advanced reasoning abilities of Large Language Models (LLMs) with intermediate steps, yet incurs prohibitive computational costs due to generation of extra tokens.
  Recent studies empirically show that compressing reasoning steps into latent states, or implicit CoT compression, offers a token-efficient alternative.
  However, the mechanism behind CoT compression remains unclear.
  In this paper, we provide the first theoretical analysis of the difficulty of learning to internalize intermediate reasoning steps. 
  By introducing \emph{Order-$r$ Interaction}, we prove that the learning signal for high-order logical dependencies exponentially decays to solve \emph{irreducible problem}, where skipping intermediate steps inevitably leads to high-order interaction barriers.
  To empirically validate this, we introduce NatBool-DAG, a challenging benchmark designed to enforce irreducible logical reasoning and eliminate semantic shortcuts. 
  Guided by our theoretical findings, we propose ALiCoT (\textbf{Al}igned \textbf{I}mplicit \textbf{CoT}), a novel framework that overcomes the signal decay by aligning latent token distributions with intermediate reasoning states. 
  Experimental results demonstrate that ALiCoT successfully unlocks efficient reasoning: it achieves a \textbf{54.4$\times$ speedup} while maintaining performance comparable to explicit CoT.
\end{abstract}

\section{Introduction}

In recent years, Large Language Models (LLMs) have achieved a paradigm shift in reasoning capabilities, evolving from simple pattern matching to addressing complex tasks through Chain-of-Thought (CoT) reasoning~\cite{wei2022chain, zhao2023large,ma-etal-2025-memorization-understanding}, generating sequential intermediate steps leading to the final answer.
While thinking models such as GPT-4-o1 \cite{gpto1} and DeepSeek-R1 \cite{guo2025deepseek} have achieved remarkable sucess in mathematical and commonsense reasoning,
it comes at a steep price: the heavily expanded reasoning chains incurs massive token consumption and high latency, creating a significant bottleneck for real-world deployment and efficiency \cite{zhu2025surveylatentreasoning, warner-etal-2025-smarter}.
To bridge this gap, recent works have pivoted towards CoT Compression. One direction that has been empirically successful is explicitly reducing the length of intermediate steps by training model to learn shorter CoT~\cite{kang2025c3ot}. 
Another distinct line of research has aimed to internalize intermediate reasoning steps to latent states to perform implicit CoT~\cite{zhang2025lightthinkerthinkingstepbystepcompression,hao2024traininglargelanguagemodels,shen2025codicompressingchainofthoughtcontinuous}. 
Meanwhile, several research studies posited that implicit CoT maintain competitive performance on tasks where dependencies are often shallow or linguistically redundant (e.g., semantic shortcuts), yet suffer a precipitous performance collapse on solving tasks requiring rigorous logic such as multi-step mathematical reasoning or symbolic derivation \cite{zhu2025surveylatentreasoning, chen2025reasoning}.
\begin{figure}[t]
  \centering
  \includegraphics[width=0.5\textwidth]{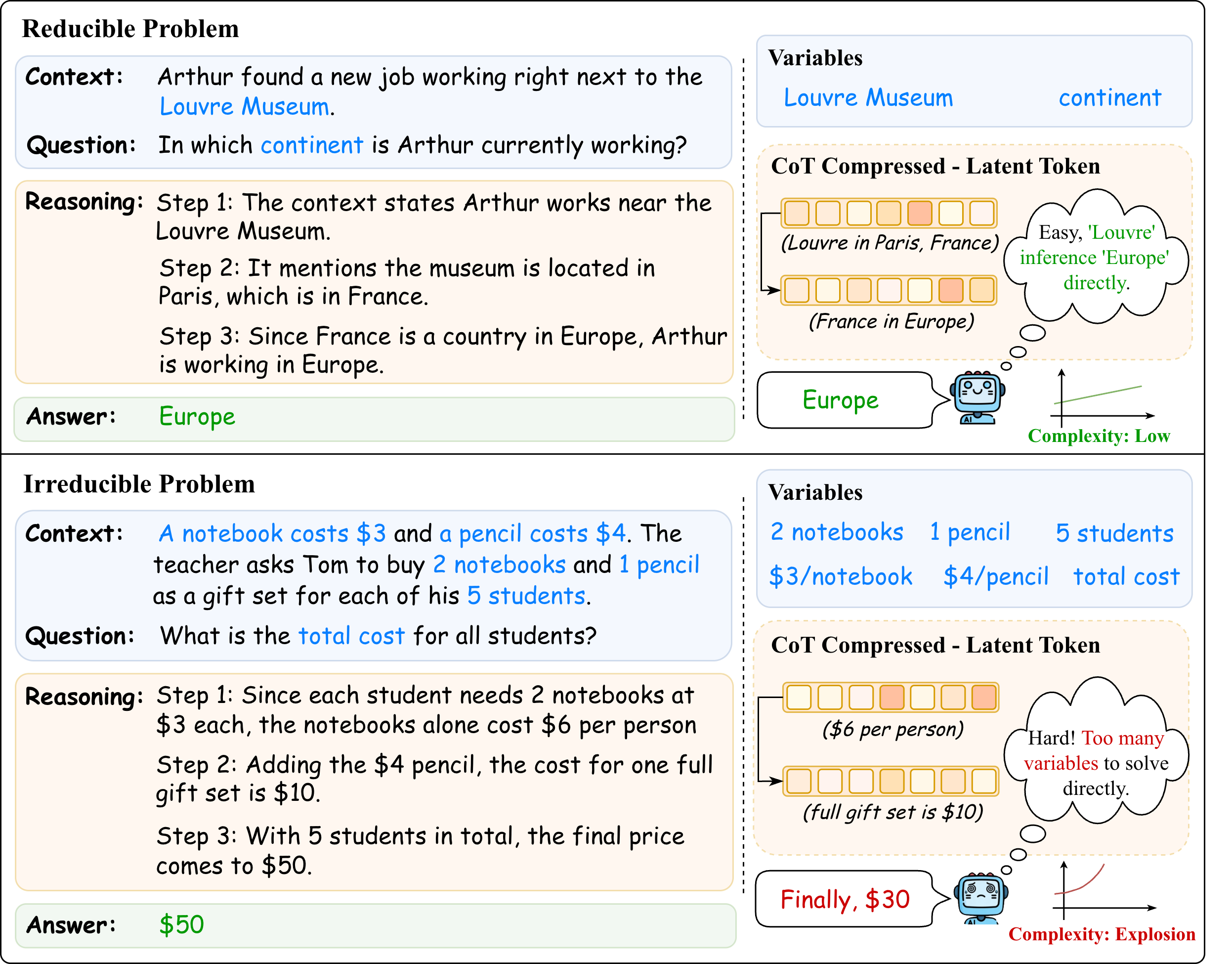}%
  \caption{CoT compression impacts reasoning difficulty based on problem reducibility. For Reducible Problems (top), compression is effective as the model faces low difficulty in reconstructing reasoning relationships. Conversely, for Irreducible Problems (bottom), omitting intermediate steps removes essential computational states, triggering a complexity explosion that leads to failure.}
  \vspace{-2mm}
  \label{fig:cot_comparison}
\end{figure}

Despite these empirical successes, however, the theoretical understanding of the implicit CoT compression mechanism remains limited. Prior investigations into CoT mechanisms~\cite{kimtransformers,abbe2024far} often overlook the critical challenges arising from the omission of local reasoning steps. Consequently, it is unclear how the difficulty of internalization scales with the number of steps being internalized across different tasks.
Motivated by this question, we provide the first theoretical analysis, by introducing \emph{Order-$r$ Interaction}, a formal measure quantifying the complexity of logical dependencies required to bypass intermediate steps, we prove that the learning signal for high-order logical dependencies exponentially decays to solve \emph{irreducible problem}, where skipping intermediate steps inevitably leads to high-order interaction barriers.
For example, as illustrated in Figure \ref{fig:cot_comparison}, the model can successfully shortcut from ``Louvre'' to ``Europe'' via simple associations in latent tokens but fails to skip intermediate steps in arithmetic tasks, where the complex dependencies among variables lead to high-order interaction barriers.

By analyzing the learning dynamics of the Parity Problem, we reveal that \textbf{the difficulty of learning to compress implicit CoT poses a fundamental optimization challenge: the gradient signal required to learn high-order logical dependencies decays exponentially with the number of compressed steps, effectively burying the reasoning path in noise.}

Motivated by this theoretical insight, we present a comprehensive study spanning theory, evaluation, and methodology to overcome these barriers. Our contributions are as follows:

$\bullet$ \textbf{The Intractability of CoT Compression:} We provide a theoretical perspective on why Implicit CoT fails in complex logic. We prove that as reasoning steps are compressed into latent states without supervision, the learning efficiency for high-order interactions degrades exponentially. This theoretical result explains the "reasoning collapse" observed in existing methods when scaling to deep logical tasks.

$\bullet$ \textbf{NatBool-DAG Benchmark:} To rigorously evaluate CoT compression beyond the noise of natural language redundancies, we introduce NatBool-DAG, a scalable synthetic benchmark based on Directed Acyclic Graphs of Boolean logic embedded in natural language. Unlike standard datasets that may allow semantic shortcuts, NatBool-DAG enforces strict, irreducible logical dependencies, serving as a litmus test for genuine implicit reasoning capabilities.

$\bullet$ \textbf{ALiCoT: Aligned Implicit Chain-of-Thought:} Guided by our theory, we propose ALiCoT, a framework imposing distribution alignment on latent tokens. By aligning latent tokens with explicit reasoning semantics, ALiCoT prevents signal decay. Results show significantly improved learning efficiency and stability, matching Explicit CoT on complex tasks while retaining compression benefits.

\section{Related Work}

Reasoning-efficient LLMs generally target explicit CoT compression or implicit latent reasoning. Explicit methods shorten natural language rationales via distillation \cite{kang2025c3ot, luo2025deconstructing} or token-level pruning \cite{cheng2024compressed, li2025compressing, xia-etal-2025-tokenskip}, sometimes emphasizing semantic keypoints to mitigate gradient issues \cite{feng2024keypoint}. However, these discrete approaches remain fundamentally bound by text length constraints. Conversely, Implicit CoT internalizes reasoning into continuous states \cite{deng2023implicit, deng2024explicit} or autoregressive latent scratchpads \cite{hao2024traininglargelanguagemodels, xu2025softcot}. Yet, end-to-end training often suffers from latent collapse, favoring shallow shortcuts over deep reasoning \cite{wei2025sim, zhang2025latent}. While recent frameworks like CODI \cite{shen2025codicompressingchainofthoughtcontinuous} and SIM-CoT \cite{wei2025sim} introduce auxiliary supervision to align latent and explicit representations, they lack rigorous principled theoretical guidance. Consequently, their heuristic constraints often fail to fully capture the intricate logical dependencies required for truly complex tasks.

\begin{figure*}[htbp]
    \centering
    \includegraphics[width=0.9\textwidth]{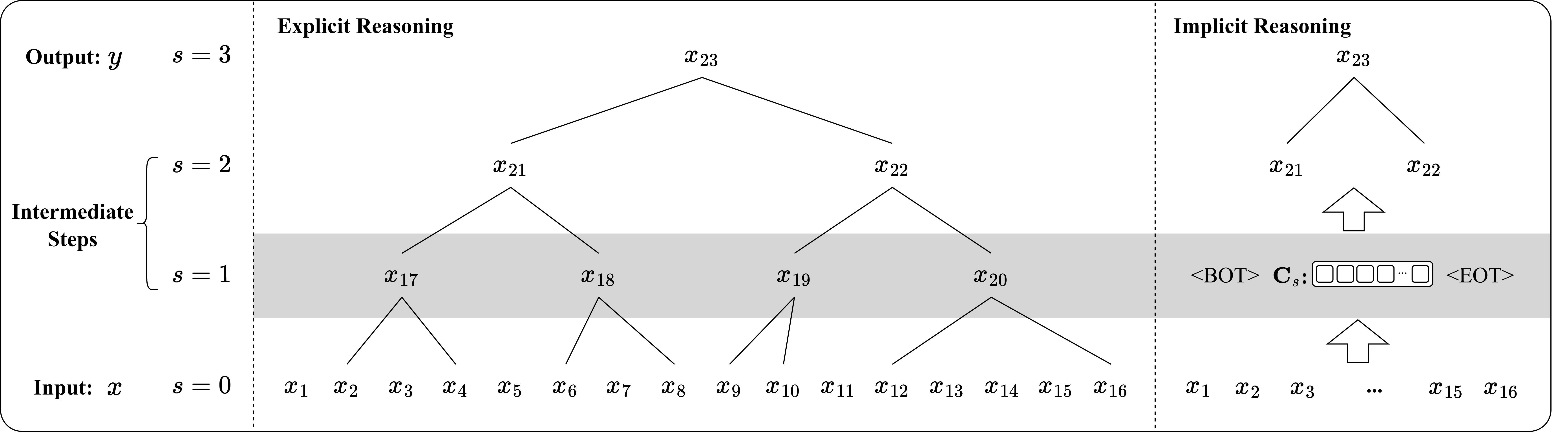}
    \caption{Schematic of the k-Parity Problem. $\bx_{17}, \cdots, \bx_{22}$ represent intermediate reasoning steps, which are replaced by latent tokens in the case of implicit Chain-of-Thought.}
    \label{fig:parity_problem}
\end{figure*}

\section{Preliminaries and Setup}

This section first defines the mathematical notations used in the paper, then systematically introduces the foundational problem (parity problem), core object (Implicit CoT), and model architecture (Transformer) required for subsequent analysis, laying the groundwork for theoretical and experimental discussions.

\textit{Notation}. We denote the set $\{1,2,\ldots,n\}$ as $[n]$. Vectors and matrices are denoted in bold text (e.g., $\boldsymbol{x}, \boldsymbol{A}$), whereas scalars appear in plain text (e.g., $y$). 
For $\bz\in\RR^n$ we write $\phi(\bz) = (\phi(z_1),\cdots, \phi(z_n))^\top$, $\bz^2 = \bz\odot\bz = (z_1^2,\cdots,z_n^2)$ and $\abs{\bz} = (|z_1|,\cdots,|z_n|)^\top$. The 2-norm is always denoted by $\norm{\cdot}$. The multi-linear inner product or contraction of $\bz_1,\cdots,\bz_r\in\RR^n$ for any $r\in\NN$ is denoted as $\tc{\bz_1,\cdots,\bz_r} := \sum_{i=1}^n z_{1,i}\cdots z_{r,i}$. In particular, $\tc{\bz_1} = \bz_1^\top\B1$ and $\tc{\bz_1,\bz_2} = \bz_1^\top\bz_2$.

\subsection{The Parity Problem}
\label{sec:parity_problem}

Given an index set $\{1,\cdots,d\}$, for $d$-bit inputs $\bx = (x_j)_{j=1}^d\sim\Unif(\{\pm 1\}^d)$, the $k$-parity problem involves predicting $y=\prod_{j\in p} x_j$, where $p\subseteq [d]$ and $|p|=k$. In this problem, the support subset $p$ is unknown, 
and we denote the set of all possible $p$ as $P_k$, so $|P_k|=\binom{d}{k}$. Our problem setup follows the definition in \citet{ICLR2025_6e2986de} for theoretical analysis. We abuse notation and identify the set of indices $p$ with the corresponding parity mapping $\bx\mapsto \prod_{j\in p} x_j$. Given $n$ samples $(\bx^i,y^i)_{i\in[n]}$, our goal is to accurately predict the parity of any unseen test input.

\subsection{Transformer Model and Optimization Objective}

We investigate the performance of Transformer models \cite{vaswani2017attention} in solving the parity problem. While standard Transformers consist of multiple stacked self-attention and feed-forward network (FFN) layers, we analyze a simplified architecture comprising a single attention layer and a specialized FFN tailored for parity tasks (see Section \ref{sec:transformer_model} for details). Given an input $\bx \in \mathbb{R}^d$, the model projects it into an embedding space, processes it through the self-attention and FFN modules, and generates a prediction via a Softmax layer.
For optimization, we employ the standard Cross-Entropy (CE) loss. Let $\hat{\by}^i \in [0,1]^2$ denote the predicted probability vector for the $i$-th sample, corresponding to the class labels $\{1, -1\}$. The empirical risk $\mathcal{L}$ over $n$ samples is formulated as:
\begin{equation}
   \label{eqn:dir_ce_loss}
   \mathcal{L} = - \frac{1}{n} \sum_{i=1}^n \sum_{c \in \{1, -1\}} \mathbb{I}(y^i = c) \log (\hat{\by}^i_c),
\end{equation}

where $\mathbb{I}(\cdot)$ is the indicator function and $\hat{\by}^i_c$ represents the probability assigned to label $c$. Minimizing $\mathcal{L}$ is equivalent to maximizing the likelihood of the correct parity label.

\subsection{Chain-of-thought Learning Objectives}

For the parity problem, a Chain-of-Thought reasoning approach decomposes the $k$-parity task into a hierarchical composition of $2$-parity operations, as shown in Figure \ref{fig:parity_problem}. Specifically, given a $d$-bit input $\bx=(x_1, \cdots, x_d) \in \{1, -1\}^d$, the calculation of the parity for a size-$k$ subset can be modeled as a binary tree of height $H=\lceil \log_2 k \rceil$. 

The reasoning process introduces a sequence of intermediate tokens. For a new node indexed by $t > d$, its value is computed by multiplying its two preceding child nodes, denoted as $x_t = x_{u} \cdot x_{v}$, where $u, v < t$ are the indices of the operands from the previous level. For instance, in Figure \ref{fig:parity_problem}, the intermediate step at $s=1$ generates tokens $x_{17}, \dots, x_{20}$ by aggregating pairs from the input. Finally, after $H$ reasoning steps, the root node of the tree represents the final scalar parity output $y$.

Thus, by supervising the intermediate steps, the model can progressively learn to solve each $2$-parity subproblem, ultimately deriving the solution for the entire $k$-parity problem. The training objective of the model then becomes minimizing the cross-entropy loss at each step:
\begin{equation}
  \label{eq:cot_ce_loss}
  \begin{split}
   \mathcal{L}^{\text{CoT}}(\theta) 
    &= -\frac{1}{n} \sum_{j=d+1}^{d+k-1} \sum_{c \in \{1, -1\}} \tc{\mathbb{I}(\bx_j=c), \log(\hat \bx_{j,c})},
  \end{split}
\end{equation}

\subsection{Implicit Chain-of-Thought}

A common approach to Implicit Chain-of-Thought (ImpCoT) is to gradually replace explicit reasoning steps with "thinking tokens" \citep{zhang2025lightthinkerthinkingstepbystepcompression, hao2024traininglargelanguagemodels}. Rather than abandoning the chain structure entirely, this method aims to compress reasoning steps into compact latent representations.

In the context of the parity problem, we implement this by concealing an entire hierarchical layer of the solution tree at a time. As shown in Figure \ref{fig:parity_problem}, when Step 1 (layer 1) is converted into an implicit chain, the entire sequence of explicit intermediate tokens corresponding to that layer (e.g., $[\bx_{17},\cdots,\bx_{20}]$) is replaced by a single latent token $\bc_s$. Consequently, the model must learn to encode the computational logic of that specific layer into this dense vector representation.

When step $s$ is implicit, the training objective transforms into predicting the subsequent explicit steps (starting from step $s+1$) given the input $\bx$ and the latent token $\bc_s$. The loss function skips the supervision for the concealed layer and is formulated as:

\begin{equation}
  \label{eq:imp_cot_loss}
  \begin{split}
   \mathcal{L}^{\text{iCoT}}(\theta) &= -\frac{1}{n} \sum_{j=d+\tau_s+1}^{d+k-1} \sum_{c \in \{1, -1\}} \tc{\mathbb{I}(\bx_j=c), \log(\hat \bx_{j,c})}, \\
   \text{with } \hat{\bx}_j &= f_\theta(\bx_{1:d}, \bc_s, \bx_{d+\tau_s+1:j-1}),
  \end{split}
\end{equation}
where $\tau_s$ denotes the total number of explicit tokens originally required up to step $s$, and $f_\theta$ predicts the target token $\bx_j$ based on the problem input $\bx_{1:d}$, the latent state $\bc_s$, and the history of subsequent explicit steps.

\textbf{Embedding Mechanisms for Latent Tokens.} 
Existing approaches for implementing implicit reasoning typically employ special control tokens—such as \texttt{<bot>} (Begin of Thought) and \texttt{<eot>} (End of Thought)—to delimit the latent space. Under this common framework, the reasoning process is structured as a sequence $[\bx_{\text{context}}, \texttt{<bot>}, \bc_s, \texttt{<eot>}]$. However, the derivation of the latent vector $\bc_s$ varies significantly across different studies, generally falling into two paradigms:

\textbf{$\bullet$Imp.Base-1: Independent Learnable Parameters.} 
One category of methods treats the latent token $\bc_s$ as a standalone, global parameter \cite{zhang-etal-2025-lightthinker, goyal2024think}. In this setting, $\bc_s$ is decoupled from the model's immediate runtime activations and functions similarly to a "soft prompt" or a continuous instruction embedding. The vector is typically initialized by sampling from a high-dimensional uniform distribution:
\begin{equation}
   \bc_s \sim \mathcal{U}(-1, 1)^{d_{\text{model}}}.
\end{equation}
During training, this vector is optimized via backpropagation to encode a static, generalized reasoning pattern that bridges the gap between the context and the answer, remaining constant across different inference instances.

\textbf{$\bullet$Imp.Base-2: Dynamic Generation via Internal States.} 
The other category generates $\bc_s$ dynamically from the model's internal processing\cite{hao2024traininglargelanguagemodels}, enabling an instance-specific "thinking" flow. This process is autoregressive:
\begin{enumerate}
    \item \textbf{State Extraction:} Given the context ending with the start marker $[\dots, \bx_{t}, \texttt{<bot>}]$, the model performs a forward pass. The output hidden state at the \texttt{<bot>} position is captured to serve as the latent token:
    \begin{equation}
          \bc_s \leftarrow \bh_{\texttt{<bot>}}.
    \end{equation}
    \item \textbf{Recurrent Inference:} This computed state $\bc_s$ is then fed back into the sequence followed by \texttt{<eot>}. The model processes $[\dots, \texttt{<bot>}, \bc_s, \texttt{<eot>}]$ to predict the subsequent explicit token.
\end{enumerate}
Unlike the static parameter approach, this mechanism ensures that the latent representation is dynamically constructed based on the specific input context.

\section{Theoretical results}

Through a theoretical analysis of the parity problem, this section first elucidates the learning dynamics governing how Transformers capture critical contextual information to perform reasoning. Building on this foundation, we contrast the learning efficuiency of implicit versus explicit CoT, quantify the training overhead of implicit approaches, and identify the limitations inherent in two dominant implicit CoT training paradigms. Finally, we unveil why current implicit CoT techniques yield performance gains on specific datasets yet remain ineffective for complex tasks such as mathematical reasoning.

\subsection{Order-$r$ Interaction and Signal Decomposition}
\label{sec:4.1}

To rigorously quantify reasoning mechanics, we model logical dependencies as high-order variable interactions within the context of the parity problem. Here, complexity corresponds to the interaction order, the size of the irreducible input subset determining the output. This distinguishes reasoning from simple pattern matching, as it requires identifying specific joint dependencies. We employ a Taylor series expansion $\phi(z) = \sum_{k=0}^{\infty} \frac{\phi^{(k)}(0)}{k!} z^k$ to generalize our analysis to any smooth activation function $\phi$, allowing us to model learning dynamics for interaction terms of arbitrary order.

The following definition formalizes this structure to categorize learning difficulty.

\begin{defn}[Order-$r$ Interaction]
   \label{def:order_r_interaction}
   A target variable $y$ is driven by an Order-$r$ Interaction supported on a set of input vectors $\mathcal{S}$, if $y$ depends on the joint values of the vectors in $\mathcal{S}$ ($|\mathcal{S}|=r$), while remaining independent of any proper subset $\mathcal{S}' \subsetneq \mathcal{S}$. For any vector $\bx \in \mathcal{S}$, we say that $\bx$ is in the Order-$r$ Interaction of $y$.
\end{defn}

This definition captures the hierarchical nature of reasoning shown in Figure \ref{fig:parity_problem}. For instance, $x_{17}$ is an Order-2 Interaction of $\{\bx_2, \bx_4\}$. Deeper in the network, $x_{21}$ depends on the joint values of $\{\bx_2, \bx_4, \bx_6, \bx_8\}$, constituting an Order-4 Interaction relative to the atomic inputs. However, this high-order interaction is formed by aggregating lower-order components: $x_{21}$ exhibits an Order-2 dependency on the intermediate features $\{x_{17}, x_{18}\}$.

To solve the $k$-parity problem, the optimizer must distinguish relevant vectors (those in $\mathcal{S}$) from irrelevant ones via the attention weights. We analyze the gradient of the loss to see if it provides a directional signal pointing to the relevant input vectors.

\begin{thm}[Generalized Gradient Signal Decomposition]
   \label{thm:gradient_hierarchy_general}
   Consider the Transformer model defined in Section \ref{sec:transformer_model} with a smooth activation function $\phi$. 
   Let $\gamma_r = \phi^{(r)}(0)/(r-1)!$ be the interaction coefficient for order $r$.
   Let $\mathbb{I}_{j}^{(r)}$ indicate whether input $\bx_j$ is part of the Order-$r$ Interaction set $\mathcal{S}^{(r)}$. 
   Let $C_{\mathrm{signal}}^{(r)}$ denote the gradient signal of the Order-$r$ interaction term.
   With probability at least $1 - p$, the finite-sample noise $\kappa$ satisfies the bound $O\left(\sqrt{\frac{\log (m/p)}{n}}\right)$, and the gradient with respect to the attention weight $w_{j,m}$ follows the generalized asymptotic expansion:
   \begin{equation}
      \label{eq:grad_asymptotic_general}
      \begin{split}
          \frac{\partial \mathcal{L}}{\partial w_{j,m}} = 
          & \sum_{r=1}^{\infty} \gamma_r \cdot \Bigg[
          \underbrace{-\Theta(m^{-r}) \cdot \mathbb{I}_j^{(r)} \cdot C_{\mathrm{signal}}^{(r)} }_{\text{Order-}r \text{ Signal}} \\
          & + \underbrace{\Theta(m^{-(r+1)}) \cdot C_{\mathrm{signal}}^{(r)} }_{\text{Order-}r \text{ Bias}} 
          \Bigg]
          + \underbrace{O\left( \frac{\kappa}{m} \right)}_{\text{Noise}}.
      \end{split}
   \end{equation}
\end{thm}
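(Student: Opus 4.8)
The plan is to reduce the gradient to a population expectation of a polynomial in the Rademacher inputs, expand the smooth activation by its Taylor series to organize that polynomial by \emph{interaction order}, use orthogonality of parity characters under $\Unif(\{\pm1\}^d)$ to collapse the resulting multinomial sum into a diagonal (``signal'') part and a symmetric (``bias'') part, and finally track the $\Theta(1/m)$ scale of the near-uniform attention weights together with a uniform concentration bound for the empirical average.

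\emph{Reduction and Taylor expansion.} First I would unroll the single-attention-plus-FFN model of Section~\ref{sec:transformer_model} and note that at the prediction position $m$ the output logit depends on the attention parameters only through the normalized mixture $\zeta := \sum_j a_j(\bw)\,x_j$ with $a_j=\sm(w_{\cdot,m})_j$. Differentiating $\mathcal{L}$, using $\partial a_j/\partial w_{j,m}=a_j(1-a_j)$ and $\partial a_{j'}/\partial w_{j,m}=-a_j a_{j'}$, gives
\begin{equation}
  \frac{\partial \mathcal{L}}{\partial w_{j,m}} = \frac1n\sum_{i=1}^n g(\hat\by^i,y^i)\,\phi'(\zeta^i)\,\frac{\partial\zeta^i}{\partial w_{j,m}},
\end{equation}
where $g$ is the bounded logit-derivative of the CE loss; I would then pass to the population expectation $\E{\cdot}$, deferring the deviation. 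Next, writing $\phi'(z)=\sum_{r\ge1}\gamma_r z^{r-1}$ with $\gamma_r=\phi^{(r)}(0)/(r-1)!$ and expanding $\zeta^{r-1}$ (together with the linear factor $\partial\zeta/\partial w_{j,m}$) multinomially produces, for each $r$, degree-$r$ monomials in the $x$'s whose coefficients are products of exactly $r$ attention weights.

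\emph{Character orthogonality, signal vs.\ bias, and scaling.} Since $\bx\sim\Unif(\{\pm1\}^d)$ and the relevant (intermediate) target is a parity $\chi_p(\bx)=\prod_{l\in p}x_l$ with $p$ the Order-$r$ Interaction set $\mathcal{S}^{(r)}$, the loss residual correlates with $\chi_p$ and $\E{\chi_p(\bx)\,x_{l_1}\cdots x_{l_r}}\neq0$ exactly when the multiset $\{l_1,\dots,l_r\}$ reduces mod $2$ to $p$. At order $r=|\mathcal{S}^{(r)}|$ the only surviving tuples are permutations of $\mathcal{S}^{(r)}$; differentiating in $w_{j,m}$ keeps such a term only if $j\in\mathcal{S}^{(r)}$, which yields the factor $\mathbb{I}_j^{(r)}$, a definite sign, and defines $C_{\mathrm{signal}}^{(r)}$ — the Order-$r$ Signal. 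The next surviving tuples ($r{+}1$ indices: $\mathcal{S}^{(r)}$ plus one repeated index summed over all positions) are symmetric in the free index, hence do not single out $j$; they give the Order-$r$ Bias (opposite in sign), with the same $C_{\mathrm{signal}}^{(r)}$. Finally, at the relevant operating point the attention is near-uniform, $a_j=\Theta(1/m)$ and $\partial a_j/\partial w_{j,m}=\Theta(1/m)$, so a degree-$r$ diagonal monomial carries $r$ weight factors and contributes $\Theta(m^{-r})$, while the bias, with one extra summed weight, contributes $\Theta(m^{-(r+1)})$; for the finite-sample term each summand is bounded, so Hoeffding plus a union bound over the $O(m)$ attention coordinates gives $\kappa=O(\sqrt{\log(m/p)/n})$ with probability at least $1-p$, and the leftover $\Theta(1/m)$ weight factor yields the $O(\kappa/m)$ noise. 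Summing over $r$ gives \eqref{eq:grad_asymptotic_general}.

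\emph{Main obstacle.} The delicate part is the combinatorial bookkeeping in the last step: showing that, uniformly in $r$, the diagonal tuples contribute exactly $\Theta(m^{-r})$ with the stated indicator and sign — so that no lower-order, non-diagonal monomial leaks into the Order-$r$ Signal — and that the ``one squared index'' tuples really are the leading spurious contribution at the next order. A secondary technicality is justifying the term-by-term differentiation and expectation of the Taylor series for a general smooth $\phi$, which requires boundedness of $\zeta$ on the data support and control of the tail $\sum_{r>R}\gamma_r\zeta^{r-1}$.
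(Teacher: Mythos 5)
Your proposal follows essentially the same route as the paper's appendix proof: chain-rule differentiation through the softmax-attention and FFN, Taylor expansion of $\phi'$ into interaction coefficients $\gamma_r$, a split of the attention gradient $\sigma_j(\be_j-\hat\bz_m)$ into a $j$-specific ``signal'' piece and a $j$-agnostic ``centering/bias'' piece, parity orthogonality to isolate tuples matching $\mathcal{S}^{(r)}$, near-uniform attention giving $\Theta(1/m)$ per weight factor, and Hoeffding plus a union bound for the noise floor $\kappa$. One small caveat: your description of the bias as ``$r{+}1$ indices: $\mathcal{S}^{(r)}$ plus one repeated index'' is imprecise at the level of Rademacher characters (a single repeated index changes parity and cannot reduce to $\mathcal{S}^{(r)}$); the correct count, which you then use, is $r{+}1$ attention-weight factors (the extra one being $\sigma_j$) with the Rademacher contraction still of degree $r$, yielding $\Theta(m^{-(r+1)})$ exactly as in the paper's Terms~B.
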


\textbf{Mechanism of Gradient Alignment.}
Equation \ref{eq:grad_asymptotic_general} reveals that the total gradient is a superposition of signals from all interaction orders. The crucial mechanism for feature selection lies in the indicator term $\mathbb{I}_j^{(r)}$. For a specific input $\bx_j$, if it belongs to the support set of an Order-$r$ interaction (e.g., $\bx_j \in \mathcal{S}^{(r)}$), the gradient receives a consistent, structured "push" proportional to $\gamma_r C_{\mathrm{signal}}^{(r)}$. 
Conversely, for irrelevant inputs where $\forall r, \mathbb{I}_j^{(r)}=0$, the signal terms vanish, leaving only the bias and noise. This differentiation allows the attention mechanism to filter out noise and align its weights $w_{j,m}$ with the causal ancestors of the target, provided the signal strength is sufficient to survive the noise floor.

Based on this decomposition, we derive four critical scaling laws that govern the limits of learning:

\textbf{1. The Hierarchy of Learnability.}
The signal strength scales as $\Theta(m^{-r})$. This implies a rapid hierarchical decay: higher-order signals are significantly attenuated in the optimization landscape. For instance, an Order-4 signal is suppressed by a factor of $m^2$ relative to an Order-2 signal. Consequently, learning complex logic is not just combinatorially harder, but optimizationally slower, requiring significantly more iterations to accumulate gradient updates.

\textbf{2. Sample Complexity Scaling ($n \propto m^{2(r-1)}$).}
For the model to capture an Order-$r$ dependency, the effective signal $\Theta(m^{-r})$ must dominate the background noise $O(\kappa/m)$. This requires $\kappa \ll m^{-(r-1)}$. Substituting the noise bound $\kappa \sim n^{-1/2}$, we derive the critical sample size:
\begin{equation}
   n = \Omega\left(m^{2(r-1)} \log \frac{m}{p}\right).
\end{equation}
This indicates an explosive growth in data requirements. While Order-2 interactions require $n \propto m^2$, Order-4 interactions demand $n \propto m^6$.

\textbf{3. The Context Length Bottleneck.}
The context length $m$ acts as a fundamental bottleneck. As $m$ increases, it dilutes the useful signal polynomially ($m^{-r}$) while only suppressing noise linearly ($1/m$). This drastically reduces the Signal-to-Noise Ratio. In long-context scenarios, the gradient signal for high-order logic becomes indistinguishable from statistical fluctuations, making the retrieval of complex dependencies computationally prohibitive without massive data scaling.

\textbf{4. The Difficulty of Implicit Reasoning.}
This framework formally explains the fundamental challenge of implicit reasoning compared to explicit reasoning. In Explicit CoT, the model decomposes a target (e.g., $x_{21}$) into local steps (e.g., $x_{17}, x_{18}$), maintaining low-order interactions ($r=2$) with strong signals ($\Theta(m^{-2})$).
However, compressing these steps into a latent state forces the model to learn the dependency directly. If the compressed latent token fails to perfectly encode the intermediates, the effective interaction order increases (e.g., to $r=3$ or $r=4$). Theorem \ref{thm:gradient_hierarchy_general} dictates that this shift causes the learning signal to vanish exponentially ($\Theta(m^{-r})$) while the data requirement explodes polynomially ($m^{2(r-1)}$).

We further analyze how the specific implementation of latent tokens exacerbates this difficulty. Detailed theoretical derivations and rigorous justifications are provided in Appendix~\ref{sec:two_token_the}

\textbf{Imp.Base-1:} Since the latent token is an input-independent parameter, it is mathematically decoupled from the context. Consequently, it cannot form dynamic interaction terms of any order with the input tokens, nor can it inject additional reasoning information into the sequence. This renders the token effectively inert for bridging the reasoning gap.

\textbf{Imp.Base-2:} Unlike explicit CoT, this approach fails to reduce the task to low-order interactions. Although theoretically enabling a model limited to $l$-order terms to capture interactions beyond order $l$ by aggregating global context, this capability comes at the cost of injecting substantial noise into the optimization landscape, making the implicit reasoning path arduous to learn.

\subsection{Empirical Analysis: Semantic Shortcuts vs. Logical Irreducibility}
\label{sec:empirical_analysis}

Theorem \ref{thm:gradient_hierarchy_general} establishes that learning implies a trade-off: implicit reasoning reduces context length $m$ but potentially increases the interaction order $r$. This raises a critical empirical question: why do current methods significantly degrade performance in mathematical tasks, while often maintaining effectiveness in commonsense reasoning? To answer this, we must analyze the inherent interaction orders required by these respective tasks.

\subsubsection{Quantifying Interaction Orders}

We propose a data-driven approach to estimate the interaction order using Pointwise Mutual Information (PMI)\cite{bouma2009normalized}. A core prerequisite for identifying true high-order dependencies is Interaction Synergy ($\sigma$)—the excess information provided by a subset of input tokens $\mathcal{S}_k$ regarding the target $y$, beyond what is available from any of its proper sub-components:
\begin{equation}
   \sigma(X_{\mathcal{S}_k}; y) = \text{PMI}(X_{\mathcal{S}_k}; y) - \max_{\substack{\mathcal{S}' \subset \mathcal{S}_k \\ |\mathcal{S}'|=k-1}} \text{PMI}(X_{\mathcal{S}'}; y).
\end{equation}
A positive synergy ($\sigma > 0$) confirms that the relationship is irreducible, meaning it cannot be decomposed into lower-order correlations. Based on this, we define two metrics to capture the nature of the data:

\textbf{1. Effective Interaction Density ($\rho_k$):} This measures the frequency of valid irreducible shortcuts per sample:
\begin{equation}
\rho_k = \frac{1}{N} \sum_{i=1}^N \sum_{\mathcal{S}_k \subseteq \mathbf{x}_i} \mathbb{I}(\sigma(X_{\mathcal{S}_k}; y_i) > 0 \land \text{PMI} > \gamma).
\end{equation}

\textbf{2. Interaction Quality ($\phi_k$):} This quantifies the \textit{strength} of the high-order signal by averaging the synergy of valid interactions. High density combined with low quality suggests redundancy (many weak links), whereas high quality implies structural necessity (strong logical bonds).

\begin{figure}[t]
    \centering
    \includegraphics[width=1\linewidth]{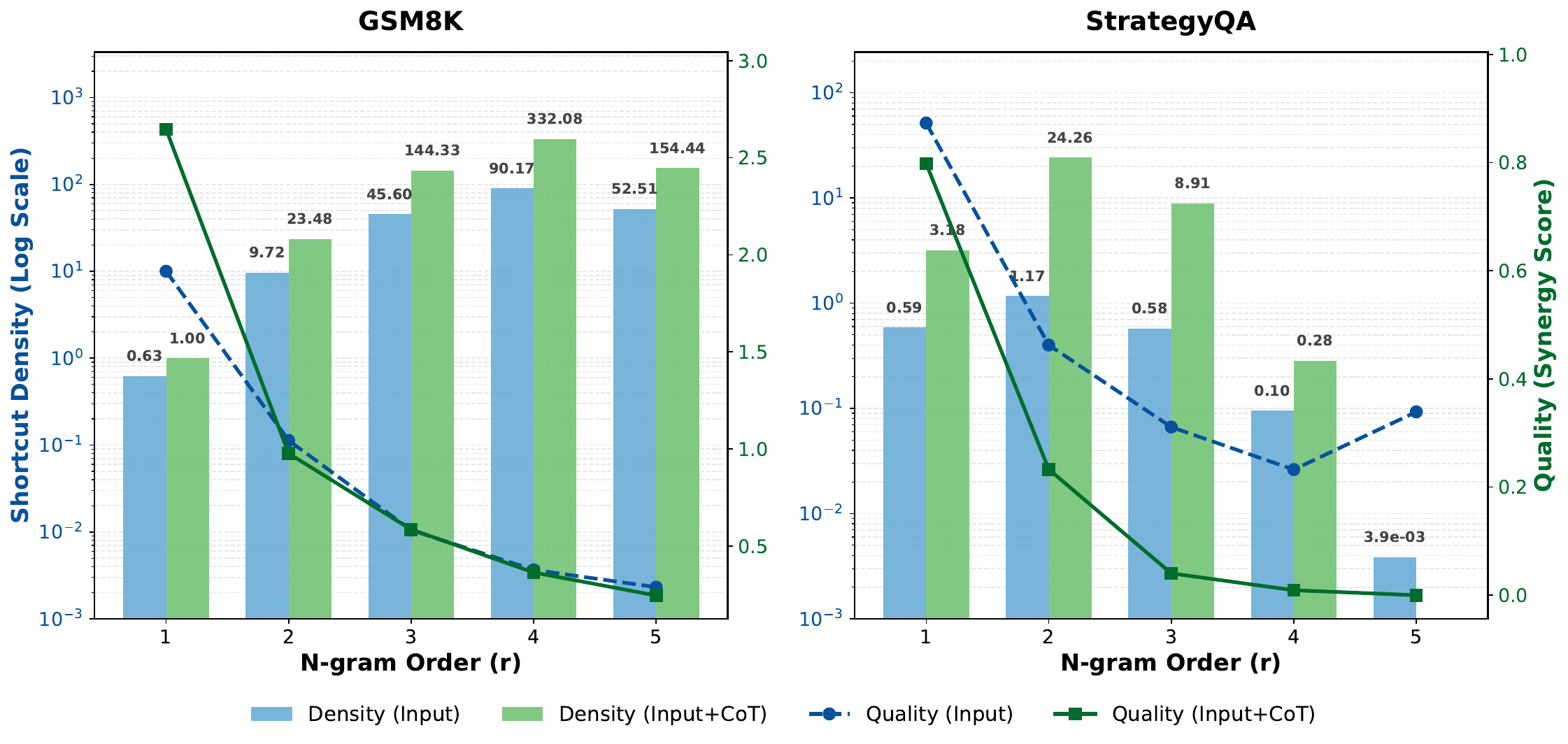}
    \caption{Interaction Landscape. We compare Effective Interaction Density ($\rho_k$, bars, left log-axis) and Interaction Quality ($\phi_k$, lines, right axis). Commonsense tasks (right) show low-quality high-order interactions, implying reducibility. Mathematical tasks (left) maintain high quality, indicating irreducible logical dependencies. Note that we sampled fewer candidate tokens for higher-order interactions to mitigate combinatorial explosion.}
    \label{fig:cot_impact}
\end{figure}

\subsubsection{Results and Analysis}

We apply these metrics to representative datasets (e.g., StrategyQA for commonsense \cite{geva2021did}, GSM8k for math \cite{cobbe2021training}). Figure \ref{fig:cot_impact} visualizes the interaction landscape, revealing a fundamental dichotomy that explains the varying effectiveness of Implicit CoT.

\textbf{Commonsense: The Prevalence of Semantic Shortcuts.}
For commonsense tasks, our analysis reveals that while high-order interactions exist in density ($\rho_k$), the distribution is dominated by low-order terms, indicating a strong low-order correlation between questions and answers. Upon introducing Chain-of-Thought (CoT), although the number of interaction terms increases significantly, the relative Interaction Quality at higher orders generally declines. This suggests that the effective new reasoning information provided by CoT is primarily concentrated in low-order terms, aligning with the "Arthur $\to$ Europe" scenario in Figure \ref{fig:cot_comparison} where the reasoning possesses a Reducible Structure. Although the explicit chain involves causal steps (Arthur $\to \dots \to$ Europe), the reasoning is primarily chain-like; thus, bypassing intermediate steps to map input directly to output does not cause an explosion in interaction order ($r \sim O(1)$).
In this scenario, Implicit CoT compresses the context length $m$ without incurring the penalty of increasing $r$. According to Eq. \eqref{eq:grad_asymptotic_general}, reducing $m$ while keeping $r$ constant significantly boosts the gradient signal ($\Theta(m^{-r})$) and reduces noise ($O(1/m)$), making Implicit CoT more effective.

\textbf{Mathematics: The Barrier of Logical Irreducibility.}
Conversely, mathematical reasoning features a greater prevalence of interaction terms at higher orders. Crucially, the introduction of CoT does not significantly reduce Interaction Quality across orders, indicating that the high-order information provided by CoT is non-redundant. Furthermore, CoT exhibits higher quality in low-order terms compared to raw input, demonstrating that CoT effectively lowers problem-solving difficulty, consistent with the Logical Irreducibility shown in Figure \ref{fig:cot_comparison}.
When solving such problems, explicit CoT decomposes the deep nested function into sequential, manageable steps (maintaining local $r=2$). However, forcing implicit reasoning to solve the problem directly in a single step causes the interaction order to scale with computational depth ($r \propto \text{Depth}$). As derived in Theorem \ref{thm:gradient_hierarchy_general}, this shift triggers an exponential decay in signal strength ($\Theta(m^{-r})$) that far outweighs the linear benefit of a shorter context. Consequently, the distinct high-order signal is buried beneath the noise, leading to model degradation.

\textit{Remark on Data Scarcity.} It is worth noting that even in math datasets, the presence of some low-order correlations indicates that restricted sample sizes often allow models to overfit to spurious shortcuts rather than learning true reasoning. This highlights the need for a rigorous testbed.

\begin{table}[t]
\centering
\caption{\textbf{Statistics of the NatBool-DAG Dataset.} The dataset is stratified by the number of reasoning hops (3 to 10). Distinct sets are generated for training, validation, and testing to ensure statistical significance across varying difficulty levels.}
\label{tab:natbool_stats}
\resizebox{0.8\linewidth}{!}{
\begin{tabular}{c|ccc}
\toprule
\textbf{Reasoning Hops} & \textbf{Train Size} & \textbf{Val Size} & \textbf{Test Size} \\
\midrule
3 Hops  & 469 & 187 & 220 \\
4 Hops  & 662 & 298 & 299 \\
5 Hops  & 911 & 204 & 276 \\
6 Hops  & 1,104 & 189 & 285 \\
7 Hops  & 1,349 & 235 & 287 \\
8 Hops  & 1,522 & 253 & 189 \\
9 Hops  & 1,712 & 256 & 229 \\
10 Hops & 1,886 & 227 & 177 \\
\bottomrule
\end{tabular}
}
\end{table}

\section{NatBool-DAG: A Benchmark for Irreducible Reasoning}
\label{sec:natbool_dag}

Section \ref{sec:empirical_analysis} reveals a critical flaw: prevalent low-order interactions in standard reasoning benchmarks often permit semantic shortcuts. Consequently, the success of existing methods may reflect statistical pattern matching rather than genuine high-order compression. To rigorously evaluate Implicit CoT, we introduce Natural-Language Boolean Directed Acyclic Graphs (NatBool-DAG), a synthetic benchmark ensuring \textit{Logical Irreducibility} and \textit{Sample Sufficiency} to enforce robust reasoning.

\subsection{Dataset Construction}

NatBool-DAG embeds random Directed Acyclic Graphs of Boolean operators (AND, OR, XOR, NOT) into coherent narratives, ensuring solutions rely strictly on internal logic rather than external knowledge. The construction pipeline involves four key steps:

\textbf{1. Logic Graph Generation.} We generate computational graphs where nodes denote binary states and edges represent logical dependencies. Topologies are randomized per sample to prevent structural overfitting.

\textbf{2. Natural Language Conversion.} Nodes are mapped to randomized entities (e.g., ``Alice'') to eliminate prior knowledge biases (e.g., semantic shortcuts), while logical gates are wrapped in narrative templates to ensure self-containment.

\textbf{3. Controllable Complexity.} Datasets are generated with variable reasoning depths ranging from 3 to 10 hops to strictly evaluate scaling behavior across complexity levels.

\textbf{4. Ground Truth CoT.} A rigorous step-by-step derivation is generated for each sample. 

Using this pipeline, we constructed the evaluation suite detailed in Table \ref{tab:natbool_stats}. While the validation and test sets maintain a relatively stable size, the training set expands as reasoning hops increase to accommodate the growing complexity. Examples are provided in Appendix \ref{sec:data_example}.

\section{Methodology: Implicit Reasoning via Latent State Alignment}

\subsection{Theoretical Motivation}
To mitigate the exponential complexity spike in implicit CoT (Theorem \ref{thm:gradient_hierarchy_general}), latent tokens $\bc_s$ must effectively substitute omitted steps as low-order computational supports. We posit that if $\bc_s$ aligns with the \textit{distribution} of underlying logical states, the model can learn manageable low-order transitions rather than reconstructing high-order logic from inputs. Thus, efficient compression hinges on aligning the latent space with the explicit reasoning process.

\subsection{ALiCoT: Aligned Implicit Chain-of-Thought}
We propose \textbf{ALiCoT} to enforce this alignment by ensuring latent tokens remain distributionally distinguishable regarding ground-truth steps. Formally, utilizing a sequence of independent learnable parameters as latent tokens, we introduce an auxiliary classifier $g(\cdot)$ mapping the latent state $h_{\text{latent}}$ back to the discrete logical step $z$. The objective is augmented as:
\begin{equation}
    \mathcal{L} = \mathcal{L}_{\text{task}} + \lambda \mathcal{L}_{\text{align}} = \mathcal{L}_{\text{task}} - \lambda \sum_{i} \log P(z_i | g(h_{\text{latent}}))
    \label{eq:alignment_loss}
\end{equation}
Minimizing $\mathcal{L}_{\text{align}}$ preserves the discriminative power of explicit steps within the latent space, thereby preventing interaction order explosion.

\begin{figure}[t]
    \centering
    \includegraphics[width=1\linewidth]{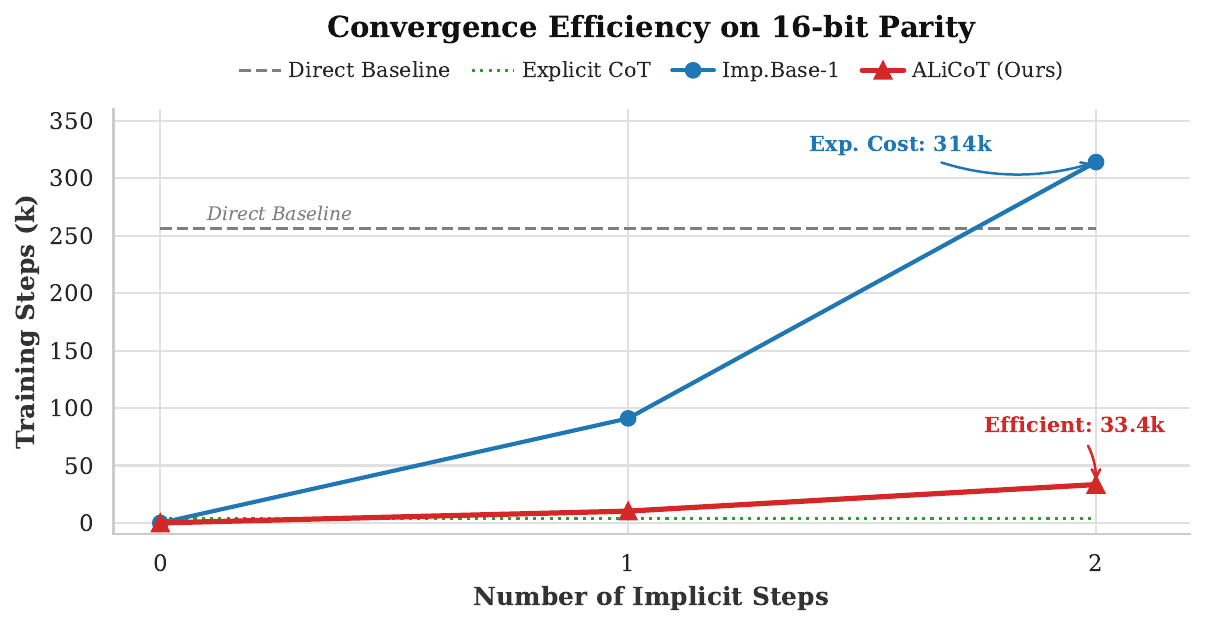} 
    \caption{\textbf{Convergence Efficiency on 16-bit Parity.} We plot the training steps required to reach $100\%$ accuracy. Imp.Base-1 suffers exponential growth, while ALiCoT maintains a flat curve.}
    \label{fig:parity_results}
\end{figure}
\subsection{Preliminary Verification on Parity Problem}

We validate ALiCoT on the 16-bit Parity problem to assess learning efficiency under compression. Specifically, we track the number of training steps required to achieve 100\% accuracy as the number of implicit steps increases. As shown in Figure \ref{fig:parity_results}, the standard iCoT baseline exhibits an exponential delay in convergence (surging to 314k steps), empirically confirming the hardness of learning implicit reasoning (Theorem \ref{thm:gradient_hierarchy_general}). In contrast, ALiCoT maintains a flat convergence curve (peaking at only 33.4k), demonstrating that our alignment objective effectively prevents the explosion in training cost.

\subsection{Adaptation to Natural Language}

While the discrete classification approach proves effective for bounded state spaces (e.g., the Parity problem), it is intractable for natural language reasoning due to the combinatorial and open-ended nature of intermediate thoughts. Enumerating all potential states to construct a discriminator is infeasible.

To address this, we adapt \textbf{ALiCoT} by leveraging the high-dimensional hidden states of the Transformer as continuous semantic targets. Instead of discrete labels, we treat the explicit Chain-of-Thought representations as prototype vectors. Consequently, we reformulate the alignment objective in Eq.~\eqref{eq:alignment_loss} from discrete cross-entropy to a feature-level cosine distance metric. By maximizing the cosine similarity between the latent token $\mathbf{c}_s$ and the representation of the $s$-th CoT step $\mathbf{t}_s$, we guide the implicit reasoning to align with the correct semantic direction:

\begin{equation}
\mathcal{L}_{\text{align}} = 1 - \cos(\mathbf{c}_s, \text{SG}(\mathbf{t}_s)) = 1 - \frac{\mathbf{c}_s \cdot \text{SG}(\mathbf{t}_s)}{|\mathbf{c}_s|_2 |\text{SG}(\mathbf{t}_s)|_2}
\label{eq:nl_alignment_loss}
\end{equation}
\vspace{-1mm}
where $\text{SG}(\cdot)$ denotes the stop-gradient operation. This objective acts as a proxy for a Softmax classifier, ensuring the latent representation captures the necessary intermediate information without requiring discrete enumeration.

\begin{figure}[t]
    \centering
    \includegraphics[width=0.8\linewidth]{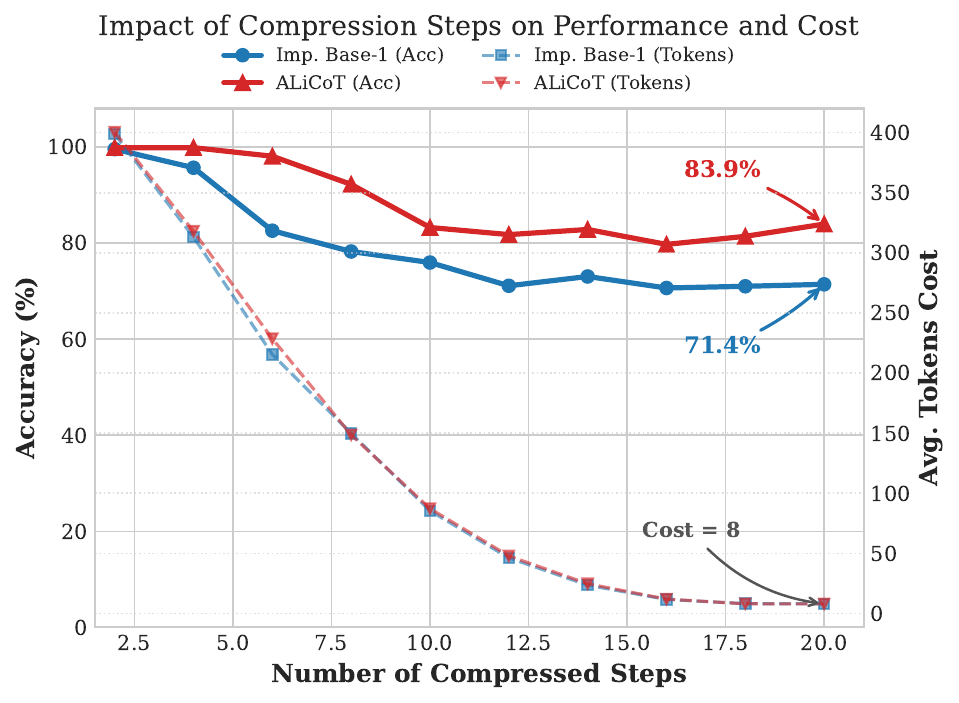}
    \caption{\textbf{Impact of compression steps on model performance and computational cost.} 
    Solid lines represent accuracy (left axis), while dashed lines indicate the average token cost (right axis). 
    As the number of compressed steps increases, \textbf{ALiCoT} (red) demonstrates superior robustness compared to Imp.~Base-1 (blue). Notably, at the highest compression level (Step 20), both methods incur a minimal cost of $\sim$8 tokens, yet ALiCoT outperforms the baseline by $>12\%$ (83.94\% vs. 71.40\%).}
\vspace{-4mm}
    \label{fig:compression_impact}
\end{figure}

\textbf{Experimental Results.} We conduct our evaluations using Qwen3-0.6B \cite{yang2025qwen3} as the backbone model. To ensure a fair comparison, all experiments adhere to identical hyperparameter configurations (detailed in Appendix~\ref{sec:experiments_details}). As presented in Figure~\ref{fig:compression_impact}, we analyze the trade-off between performance and efficiency. While token consumption for both methods drops significantly as compression steps increase and converges to approximately 8 tokens at the maximum level, their performance trajectories diverge markedly. The standard implicit baseline (Imp.~Base-1) struggles to retain reasoning capabilities, with accuracy degrading to 71.40\%. In contrast, \textbf{ALiCoT} demonstrates superior robustness, maintaining an accuracy of 83.94\% under the same extreme compression constraint. This performance gap of over 12\% confirms that feature-level alignment successfully distills reasoning structures into the latent space, enabling high performance even with minimal computational overhead. Comprehensive experimental results are provided in Appendix~\ref{sec:main_experimental_results}.

\noindent \textbf{Dual Advantage of Speed and Accuracy.}
Table \ref{tab:qwen_speed_acc} highlights the superior trade-off achieved by ALiCoT. In terms of efficiency, our method matches the aggressive \textbf{54.40$\times$ speedup} of the static implicit baseline (Imp.~Base-1). However, typical implicit methods often suffer from severe performance degradation to gain this speed. As shown in the 4B model results, while the uncompressed explicit CoT achieves a ceiling of 100\% accuracy, the baseline degrades significantly to 77.88\%. In contrast, ALiCoT successfully mitigates this loss, recovering \textbf{95.01\%} of the full reasoning capability. This demonstrates that ALiCoT achieves an optimal efficiency-accuracy trade-off: it delivers near-lossless reasoning performance comparable to the computationally expensive CoT, while strictly maintaining extremely fast inference speeds.

\begin{table}[t]
    \centering
    \small
    \setlength{\tabcolsep}{2pt}
    \renewcommand{\arraystretch}{1.1}
    \caption{\textbf{Performance and Efficiency on Qwen3.} ALiCoT matches the massive speedup of explicit CoT ($54.4\times$) while ensuring high accuracy. It reaches \textbf{95.01\%} on the 4B model, significantly surpassing Imp.Baselines without any latency penalty. Note that due to computational constraints, Imp. Base-2 is evaluated only on Qwen3-0.6B. Additional results on Llama and other model scales are detailed in Appendix~\ref{sec:main_experimental_results}.}
    \label{tab:qwen_speed_acc}
    
    \begin{tabular}{l l c c c c}
        \toprule
        \multirow{2}{*}{\textbf{Model}} & \multirow{2}{*}{\textbf{Method}} & \multicolumn{2}{c}{\textbf{Acc (\%)}} & \textbf{Avg.} & \textbf{Speed} \\
        \cmidrule(lr){3-4}
         & & \textbf{Hop3} & \textbf{Hop10} & \textbf{Acc} & \textbf{Up} \\
        \midrule
        
        \multirow{3}{*}{\textbf{Qwen3 (0.6B)}} 
          & Imp. Base-1 & 68.6 & 70.6 & 71.41 & $54.40\times$ \\
          & Imp. Base-2 & 33.5 & 53.8 & 45.87 & $7.65\times$ \\
          & \textbf{ALiCoT} & \textbf{87.3} & \textbf{83.6} & \textbf{83.94} & $\mathbf{54.40\times}$ \\
        \midrule
        
        \multirow{2}{*}{\textbf{Qwen3 (4B)}} 
          & Imp. Base-1 & 79.1 & 74.0 & 77.88 & $54.40\times$ \\
          & \textbf{ALiCoT} & \textbf{100.0} & \textbf{92.7} & \textbf{95.01} & $\mathbf{54.40\times}$ \\
        
        \bottomrule
    \end{tabular}
\end{table}
\vspace{-2mm}
\section{Discussion and Conclusion}
\vspace{-2mm}

This study establishes the theoretical complexity of CoT compression, identifying interaction order explosion as the fundamental barrier. We reveal that omitting intermediate steps causes learning signals to decay exponentially. We distill these findings into the Law of CoT Compression, and quantifying the CoT Compression Complexity ($\Phi$) as:
$\Phi \propto e^{\lambda \cdot c \cdot \rho}$
where $c$ denotes the number of compressed steps and $\rho$ represents the Logical Density (irreducibility) of the task. This formulation explains the sharp contrast observed in our experiments: while low-density commonsense tasks (low $\rho$) remain compressible, high-rigidity reasoning triggers an exponential cost explosion.
To rigorously validate this and address the "semantic shortcuts" in existing datasets, we introduce NatBool-DAG, a benchmark ensuring logical irreducibility. Guided by these insights, we propose ALiCoT, which effectively minimizes the alignment coefficient $\lambda$ by aligning latent tokens with explicit reasoning steps. This renders the exponential complexity manageable, allowing models to successfully internalize complex logic while maintaining high efficiency. Collectively, these contributions provide both the theoretical rigor and practical methodology necessary to achieve genuinely efficient CoT compression.

Future work will investigate the impact of model scaling on $\lambda$ and explore advanced alignment strategies (e.g., contrastive learning) to further minimize this coefficient. Ultimately, overcoming the fundamental exponential growth of $\Phi$ to enable extremely deep implicit reasoning remains a critical long-term objective.


\newpage
\appendix
\onecolumn
\section{Proof of Theorem \ref{thm:gradient_hierarchy_general}}
\label{sec:final_proof}

\subsection{Transformer Model}
\label{sec:transformer_model}

We discuss the learning ability of models in constructing implicit reasoning chains under a one-layer Transformer architecture. To simplify the analysis, we use absolute position encoding and single-head Softmax attention, and the single-layer Transformer also omits residual connections.

\textbf{Data encoding}: We utilize high-dimensional vector representations initialized via a uniform distribution $\mathcal{U}(-1, 1)$. To preserve distributional symmetry and geometric separability for the binary input tokens, we adopt a coupled initialization strategy. Let $\mathbf{v} \in \mathbb{R}^{d_{model}}$ be the base embedding vector corresponding to the token "$1$", with its elements sampled independently from $\mathcal{U}(-1, 1)$. Accordingly, the token "$-1$" is encoded as the negation $-\mathbf{v}$. Formally, for the input token $x_j \in \{1, -1\}$ at the $j$-th position, the resulting encoding $\mathbf{e}_j$ is defined as:
\begin{equation}
   \mathbf{e}_j = x_j \mathbf{v}.
\end{equation}
This formulation ensures that the embedding direction is strictly determined by the token sign while sharing the same magnitude and global statistical properties.

\textbf{Softmax attention layer}: The attention layer consists of key, query, and value matrices $\bK, \bQ, \bV$. To make the dynamical analysis tractable, following \citet{Zhang23, Huang23, Mahankali23, Kim24}, we focus on position encoding when obtaining attention scores, while the value matrix operates solely on the semantic embeddings $\be$. Therefore, in this paper, we treat the position encoding $\bp$ separately from the input embedding $\be$. The forward propagation of the attention layer is formally defined as:

\begin{equation}
   \label{eqn:attention}
   \hat{\bz}_m = \sum_{j=1}^{m-1} \bV\be_j \cdot \sm(\hat{\bp}_j^\top \bK^\top\bQ \hat{\bp}_m) = \sum_{j=1}^{m-1} \sigma_j(\bw_m) \be_j,
\end{equation}
where $\sigma_j(\bw_m)$ represents the attention weights (softmax scores) allocated to the $j$-th token. Specifically, we aggregate the key and query projections into a single learnable interaction matrix $\bW \triangleq \bK^\top \bQ$. Consequently, the attention weights are computed as $\sigma_j(\bw_m) = \frac{\exp(w_{j,m})}{\sum_{\alpha=1}^{m-1} \exp(w_{\alpha,m})}$, where the scalar logit $w_{j,m} = \hat{\bp}_j^\top \bW \hat{\bp}_m$ quantifies the position-based compatibility between the query at step $m$ and the key at step $j$.

\textbf{Feedforward layer}: To fully adapt to the parity problem, following \citet{ICLR2025_6e2986de}, we use a mapping function $\phi:[-1,1]\to [-1,1]$, requiring $\phi(0)=-1$, $\phi(\pm 1)=1$, and set the function to be smooth and differentiable. Finally, we define $\phi(t) = 1-ct^2+ct^4$ and $\phi'(t) = -2ct+4ct^3$.

\textbf{Output probability}: To perform the classification task, the hidden state $\bh_m$ is projected into the logit space. We employ the Softmax function to obtain the predicted probability distribution vector $\hat{\by}_m \in \mathbb{R}^2$:
\begin{equation}
   \label{eqn:output}
   \hat{\by}_m = \text{Softmax}(\bW_o^\top \bh_m), \quad \text{where } [\hat{\by}_m]_k = \frac{\exp(\mathbf{c}_k^\top \bh_m)}{\sum_{j=1}^{2} \exp(\mathbf{c}_j^\top \bh_m)}.
\end{equation}

Here, $\bW_o \in \mathbb{R}^{ d_{model} \times 2} = \left[  \mathbf{c}_1, \mathbf{c}_2 \right]$ denotes the weights of the output projection head, and $[\hat{\by}_m]_k$ represents the probability of the $k$-th class. Analogous to our input encoding strategy, we enforce a symmetric structure on the output weights. We initialize a vector $\mathbf{u} \in \mathbb{R}^{d_{model}}$ such that the weight for the class "$1$" is defined as $\mathbf{c}_1 = \mathbf{u}$, while the weight for the class "$-1$" is set as the negation, $\mathbf{c}_2 = -\mathbf{u}$. This construction ensures that the decision boundaries for the binary states remain geometrically symmetric.

\textbf{Loss function}: We define the optimization objective using the standard Cross-Entropy (CE) loss, rather than the Mean Squared Error (MSE). Given the ground-truth label $y \in \{1, -1\}$, the loss for the $m$-th token is the negative log-likelihood of the true class. Considering the binary output dimension, the loss is formulated as:
\begin{equation}
   \label{eqn:loss}
   \mathcal{L} = - \sum_{k=1}^{2} \mathbb{I}(y = l_k) \log([\hat{\by}_m]_k),
\end{equation}
where $\mathbb{I}(\cdot)$ is the indicator function. The term $l_k$ denotes the class label corresponding to the $k$-th index of the probability vector, specifically $l_1 = 1$ and $l_2 = -1$. Minimizing $\mathcal{L}$ is equivalent to maximizing the probability assigned to the correct label required for the reasoning step.

Thus, the transformer computes $\textstyle \TF(\bx_1,\cdots,\bx_{d+k-1}; \bW) = (\hat{\bx}_1,\cdots,\hat{\bx}_{d+k-1})$ where the original data $\hat{\bx}_j=\bx_j, j\in [d]$ remain unchanged and tokens $\hat{\bx}_{d+1},\cdots,\hat{\bx}_{d+k-1}$ are computed as $\hat{\bx}_m = \phi(\hat{\bz}_m)$.

\subsection{Parity-Invariant Correlation Analysis}
\label{sec:parity_invariant_correlation}

To analyze the interaction mechanism between input variables and the prediction target, we introduce a geometric interpretation based on the Hadamard product of their representations.

\textbf{Embedding Representations.}
We represent an input variable $x \in \{1, -1\}$ as a vector $x \bv$, and a target variable $y \in \{1, -1\}$ as $y \bu$, where $\bv, \bu \in \mathbb{R}^{d_{model}}$ are base vectors for the input and output spaces, respectively.

\textbf{Second-order Interaction.}
Consider a reasoning step where the target $y_m$ is determined by the parity of two input variables $x_i$ and $x_j$, i.e., $y_m = x_i x_j$. In the binary domain $\{1, -1\}$, this multiplicative relationship enforces the strict constraint:
\begin{equation}
    y_m \cdot x_i \cdot x_j = 1.
\end{equation}
This implies that the sign of the output must align with the combined sign of the inputs. To capture this geometric alignment, we define the interaction vector $\boldsymbol{\psi}$ as the element-wise product (Hadamard product, denoted by $\odot$) of the embeddings of these three variables:
\begin{equation}
\begin{aligned}
    \boldsymbol{\psi} &= (y_m \bu) \odot (x_i \bv) \odot (x_j \bv) \\
    &= (y_m \cdot x_i \cdot x_j) (\bu \odot \bv \odot \bv).
\end{aligned}
\end{equation}
Given the constraint $y_m x_i x_j \equiv 1$, the scalar coefficients cancel out, causing the interaction vector to collapse into a constant, input-independent state:
\begin{equation}
    \boldsymbol{\psi} = \bu \odot \bv^{\odot 2},
\end{equation}
where $\bv^{\odot 2} = \bv \odot \bv$ denotes the element-wise square. We term this property \textbf{Parity-Invariant Correlation}. It demonstrates that for any valid reasoning triplet $(x_i, x_j, y_m)$, their joint representation in the hyperspace always points to a fixed invariant direction $\bu \odot \bv^{\odot 2}$, regardless of the individual values of the variables.

\textbf{Generalization to High-order Interactions.}
This invariance property generalizes to $k$-th order interactions. Suppose a target $y_m$ is determined by the cumulative product of $k$ variables $\{x_{i_1}, \dots, x_{i_k}\}$, satisfying $y_m = \prod_{n=1}^k x_{i_n}$. The consistency condition becomes $y_m \cdot \prod_{n=1}^k x_{i_n} = 1$. Consequently, the cumulative Hadamard product of their vector representations remains invariant:
\begin{equation}
    \label{eqn:consistency_general}
    (y_m \bu) \odot \left( \bigodot_{n=1}^{k} (x_{i_n} \bv) \right) = \bu \odot \bv^{\odot k}.
\end{equation}
This result suggests that learning the underlying logical rule is geometrically equivalent to aligning the model's aggregate representation with the specific invariant direction $\bu \odot \bv^{\odot k}$.

\textbf{Support Set and Interaction Categories.}
To rigorously analyze the gradient components, we categorize the interaction terms based on their ability to reconstruct the target parity $y_m$. We consider a tuple of indices $J = (j_1, \dots, j_r)$ with $1 \le j_k \le m-1$. The collective contribution of these features is determined by the product $X_J = \prod_{k=1}^r x_{j_k}$.

Consistent with the parity constraint, we define a tuple $J$ as \textit{target-aligned} (or relevant) if the product of its components strictly recovers the target label (i.e., their correlation is trivial):
\begin{equation}
    \mathcal{S}_{r,m} := \left\{ (j_1, \dots, j_r) \in [m-1]^r \;\middle|\; y_m \cdot \prod_{k=1}^r x_{j_k} \equiv 1 \right\}.
\end{equation}
Tuples in $\mathcal{S}_{r,m}$ correspond to the true support of the underlying function (potentially including redundant pairs that cancel out).

Conversely, we define the set of \textit{misaligned} (or irrelevant) tuples $\mathcal{I}_{r,m}$ as those that fail to reconstruct the target:
\begin{equation}
    \mathcal{I}_{r,m} := \left\{ (j_1, \dots, j_r) \in [m-1]^r \;\middle|\; y_m \cdot \prod_{k=1}^r x_{j_k} \not\equiv 1 \right\}.
\end{equation}

Based on the rigorous definitions of $\mathcal{S}_{r,m}$ and $\mathcal{I}_{r,m}$, we examine the statistical behavior of the scalar contraction over the dataset. This analysis reveals a fundamental dichotomy in how these terms contribute to the gradient updates:

\textbf{Statistical Distinction.}
We consider the average contribution of a specific index tuple $J = (j_1, \dots, j_r)$ across $n$ samples:

\begin{itemize}
    \item \textbf{Relevant Terms (Deterministic Signal):}
    For any tuple $J \in \mathcal{S}_{r,m}$, the parity constraint ensures that $y_m \cdot \prod_{k=1}^r x_{j_k} \equiv 1$. Consequently, the contraction collapses into a deterministic constant for every sample, yielding a stable average:
    \begin{equation}
        \frac{1}{n} \sum_{i=1}^n \tc{y_m^{(i)} \bu, x_{j_1}^{(i)} \bv, \dots, x_{j_r}^{(i)} \bv} = C_{\text{signal}}.
    \end{equation}

    \item \textbf{Irrelevant Terms (Zero-Mean Noise):}
    In contrast, for any tuple $J \in \mathcal{I}_{r,m}$, the product term represents a non-trivial parity function. Under uniform inputs, the single-sample contraction is stochastic:
    \begin{equation}
        \tc{y_m \bu, x_{j_1} \bv, \dots, x_{j_r} \bv} \sim \text{Uniform}(\{+C_{\text{signal}}, -C_{\text{signal}}\}).
    \end{equation}
    Since these terms are uniformly distributed around zero, their average over the dataset is a sum of zero-mean random variables, which tends to vanish as $n$ increases.
\end{itemize}

\textbf{Concentration of Irrelevant Interactions.}
To ensure the model learns the correct rule, the signal $C_{\text{signal}}$ must dominate the worst-case noise from irrelevant terms. We formalize this bound using the concentration of measure.

\begin{lemma}[Concentration of Irrelevant Terms]\label{thm:kappa}
Let $\mathcal{I}_{r,m}$ be the set of misaligned index tuples. If inputs are i.i.d. uniform over $\{\pm 1\}$, then for any confidence level $1-p$, the maximum magnitude of the noise interactions is bounded by:
\begin{equation}
    \max_{J \in \mathcal{I}_{r,m}} \left| \frac{1}{n} \sum_{i=1}^n \tc{y_m^{(i)} \bu, x_{j_1}^{(i)} \bv, \dots, x_{j_r}^{(i)} \bv} \right| \le \kappa,
\end{equation}
where $\kappa := C_{\text{signal}} \cdot \sqrt{\frac{2}{n} \log \frac{2|\mathcal{I}_{r,m}|}{p}}$.
\end{lemma}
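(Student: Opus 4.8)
The plan is a routine Hoeffding-plus-union-bound concentration argument; the only step requiring any care is reducing the multilinear contraction to a bounded, zero-mean scalar for each misaligned tuple.

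First I would put the per-sample contraction in closed form. Using the coupled embedding $\be_j = x_j\bv$ and the output vector $\bu$, for a fixed tuple $J=(j_1,\dots,j_r)$ one has
\begin{equation}
   \tc{y_m^{(i)}\bu, x_{j_1}^{(i)}\bv,\dots,x_{j_r}^{(i)}\bv}
   = \Bigl(y_m^{(i)}\prod_{k=1}^r x_{j_k}^{(i)}\Bigr)\cdot \tc{\bu,\bv,\dots,\bv}
   = b_J^{(i)}\, C_{\text{signal}},
\end{equation}
where $b_J^{(i)} := y_m^{(i)}\prod_{k=1}^r x_{j_k}^{(i)}\in\{\pm1\}$ and $C_{\text{signal}} := \tc{\bu,\bv,\dots,\bv}$ depends only on the order $r$, so the constant is uniform over all order-$r$ tuples irrespective of alignment. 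Since $y_m$ is itself a parity of a subset of the input bits, $b_J^{(i)}$ is a parity over some multiset of input coordinates of $\bx^{(i)}$; performing the cancellation bookkeeping (coordinates occurring an even number of times drop out) shows the reduced support is empty precisely when $J\in\mathcal S_{r,m}$. Hence for $J\in\mathcal I_{r,m}$ the bit $b_J^{(i)}$ is a non-trivial parity of i.i.d.\ $\Unif(\{\pm1\})$ bits, which is itself $\Unif(\{\pm1\})$; in particular $\mathbb E[b_J^{(i)}]=0$ and $\abs{b_J^{(i)}C_{\text{signal}}}\le \abs{C_{\text{signal}}}$ almost surely.

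Next, for each fixed $J\in\mathcal I_{r,m}$ I would apply Hoeffding's inequality to the i.i.d., mean-zero summands $b_J^{(i)}C_{\text{signal}}$ supported on $[-\abs{C_{\text{signal}}},\abs{C_{\text{signal}}}]$, obtaining
\begin{equation}
   \Pr\!\left[\Bigl|\tfrac1n\sum_{i=1}^n b_J^{(i)}C_{\text{signal}}\Bigr|> t\right]\le 2\exp\!\left(-\frac{n t^2}{2C_{\text{signal}}^2}\right).
\end{equation}
A union bound over the at most $\abs{\mathcal I_{r,m}}$ misaligned tuples gives $\Pr[\max_{J\in\mathcal I_{r,m}}\abs{\cdot}>t]\le 2\abs{\mathcal I_{r,m}}\exp(-nt^2/2C_{\text{signal}}^2)$. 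Setting the right-hand side equal to $p$ and solving for $t$ yields $t = C_{\text{signal}}\sqrt{\tfrac2n\log\tfrac{2\abs{\mathcal I_{r,m}}}{p}} = \kappa$, which is exactly the claimed bound; on the complementary event, of probability at least $1-p$, every misaligned empirical average is simultaneously bounded by $\kappa$.

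The main obstacle — really the only place judgment is needed — is the first step: confirming that for every misaligned tuple $b_J^{(i)}$ is a genuinely non-constant function of the random bits (so its mean vanishes and Hoeffding applies with variance proxy $C_{\text{signal}}^2$), and that the magnitude constant is the \emph{same} $C_{\text{signal}}$ for all order-$r$ tuples regardless of alignment. Once this dichotomy — deterministic $+C_{\text{signal}}$ for $\mathcal S_{r,m}$ versus symmetric $\pm C_{\text{signal}}$ for $\mathcal I_{r,m}$ — is pinned down, the concentration and union-bound steps are entirely mechanical.
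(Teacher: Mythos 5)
Your proof is correct and follows essentially the same route as the paper: reduce each per-sample contraction to a bounded, zero-mean scalar for misaligned tuples, apply Hoeffding pointwise, and then union-bound over $\mathcal{I}_{r,m}$ before solving for the threshold. The only difference is that you spell out the preliminary reduction (factoring out the sign $b_J^{(i)}$, noting that $C_{\text{signal}} = \tc{\bu,\bv,\dots,\bv}$ depends only on $r$, and arguing via non-trivial-parity uniformity that the mean vanishes) which the paper takes for granted from its earlier "Statistical Distinction" discussion, so your version is a more self-contained rendering of the same argument.
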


\begin{proof}
The proof relies on the zero-mean property of terms in $\mathcal{I}_{r,m}$ and a union bound argument:

1.  \textbf{Point-wise Bound via Hoeffding:}
    Fix an irrelevant tuple $J \in \mathcal{I}_{r,m}$. The summation consists of $n$ independent random variables bounded by $[-C_{\text{signal}}, C_{\text{signal}}]$ with zero expectation. By Hoeffding's inequality:
    \begin{equation}
        \Pr\left( \left| \frac{1}{n} \sum_{i=1}^n \tc{y_m^{(i)} \bu, x_{j_1}^{(i)} \bv, \dots, x_{j_r}^{(i)} \bv} \right| \ge \kappa \right) \le 2 \exp\left( -\frac{n\kappa^2}{2 C_{\text{signal}}^2} \right).
    \end{equation}

2.  \textbf{Uniform Bound via Union Bounding:}
    We must ensure that \textit{none} of the potential irrelevant terms masquerade as a signal. Applying the union bound over all tuples in $\mathcal{I}_{r,m}$, we limit the probability that the \textit{maximum} noise exceeds $\kappa$:
    \begin{equation}
        \Pr\left( \max_{J \in \mathcal{I}_{r,m}} \left| \frac{1}{n} \sum_{i=1}^n \tc{y_m^{(i)} \bu, x_{j_1}^{(i)} \bv, \dots, x_{j_r}^{(i)} \bv} \right| \ge \kappa \right) \le |\mathcal{I}_{r,m}| \cdot 2 \exp\left( -\frac{n\kappa^2}{2 C_{\text{signal}}^2} \right).
    \end{equation}
    Setting the right-hand side to $p$ and solving for $\kappa$ yields the stated bound.
\end{proof}

\textbf{Implication.} This result guarantees that with sufficient samples ($n = \tilde{\Omega}(\log m)$), there exists a clear separation margin: the signal $C_{\text{signal}}$ dominates the noise floor $\kappa$, driving the gradient descent towards the correct support $\mathcal{S}_{r,m}$.

\subsection{Gradient Derivation for Attention Weights}

To analyze the learning dynamics, particularly for tasks requiring high-frequency representations like parity, we derive the gradient of the loss $\mathcal{L}$ with respect to the attention weights $w_{j,m}$. For the sake of clarity in the initial derivation, we define $\mathcal{L}$ as the standard Cross-Entropy loss for a \textbf{single training instance} with target class $t$, denoted as $\mathcal{L} = -\log([\hat{\by}_m]_t)$. The generalization to the mean loss over a batch of $n$ samples will be incorporated subsequent to the derivation of the per-sample gradient components.

Applying the chain rule, the gradient can be decomposed into four primary components:
\begin{equation}
   \label{eq:gradient_chain_rule}
   \frac{\partial \mathcal{L}}{\partial w_{j,m}} = \underbrace{\frac{\partial \mathcal{L}}{\partial \hat{\by}_m} \cdot \frac{\partial \hat{\by}_m}{\partial \bh_m}}_{\text{(I)}} \cdot \underbrace{\frac{\partial \bh_m}{\partial \hat{\bz}_m}}_{\text{(II)}} \cdot \underbrace{\frac{\partial \hat{\bz}_m}{\partial w_{j,m}}}_{\text{(III)}}.
\end{equation}

\paragraph{Part (I): Loss and Output Projection.}
We derive the gradient of the loss with respect to the hidden state $\bh_m$ by decomposing the chain rule through the logits $\mathbf{z} = \bW_o^\top \bh_m$. Let $z_i = \mathbf{c}_i^\top \bh_m$ be the $i$-th logit. The gradient can be expressed as:
\begin{equation}
    \frac{\partial \mathcal{L}}{\partial \bh_m} = \sum_{i=1}^K \frac{\partial \mathcal{L}}{\partial z_i} \frac{\partial z_i}{\partial \bh_m}.
\end{equation}
First, we compute the derivative of the Cross-Entropy loss $\mathcal{L} = -\log(p_t)$ with respect to the logit $z_i$, where $p_k = [\hat{\by}_m]_k = \text{softmax}(\mathbf{z})_k$ denotes the predicted probability for class $k$, and $c$ is the index of the true class. Applying the chain rule, we have:
\begin{equation}
\begin{aligned}
    \frac{\partial \mathcal{L}}{\partial z_i} &= \sum_{k=1}^K \frac{\partial \mathcal{L}}{\partial p_k} \frac{\partial p_k}{\partial z_i}.
\end{aligned}
\end{equation}
Since the loss only depends on the target class probability $p_t$, the term $\frac{\partial \mathcal{L}}{\partial p_k}$ is non-zero only when $k=c$, specifically $\frac{\partial \mathcal{L}}{\partial p_t} = -\frac{1}{p_t}$. Using the derivative of the Softmax function $\frac{\partial p_k}{\partial z_i} = p_k (\delta_{ki} - p_i)$, the expression simplifies to:
\begin{equation}
\begin{aligned}
    \frac{\partial \mathcal{L}}{\partial z_i} &= -\frac{1}{p_t} \cdot \frac{\partial p_t}{\partial z_i} = -\frac{1}{p_t} \cdot p_t (\delta_{ci} - p_i) = p_i - \delta_{ci}.
\end{aligned}
\end{equation}
In vector notation, this corresponds to the residual vector $\frac{\partial \mathcal{L}}{\partial \mathbf{z}} = (\hat{\by}_m - \bm{1}_t)^\top$, where $\bm{1}_t$ is the one-hot encoding of the target class.
Next, considering the linear projection $z_i = \mathbf{c}_i^\top \bh_m$, the partial derivative is simply the class embedding: $\frac{\partial z_i}{\partial \bh_m} = \mathbf{c}_i^\top$. Substituting these back into the summation yields:
\begin{equation}
\begin{aligned}
    \frac{\partial \mathcal{L}}{\partial \bh_m} &= \sum_{i=1}^K (p_i - \delta_{ci}) \mathbf{c}_i^\top = \left( \sum_{i=1}^K p_i \mathbf{c}_i - \mathbf{c}_t \right)^\top = -\left( \mathbf{c}_t - \mathbb{E}_{\hat{\by}_m}[\mathbf{c}] \right)^\top.
\end{aligned}
\end{equation}
This term represents the error signal in the embedding space, pointing from the current weighted expected embedding $\mathbb{E}_{\hat{\by}_m}[\mathbf{c}]$ towards the target class centroid $\mathbf{c}_t$.

\paragraph{Part (II): Feed-Forward Network.}
The gradient propagates through the element-wise Feed-Forward layer, defined as $\bh_m = \phi(\hat{\bz}_m)$. Since $\phi$ is applied independently to each dimension of the input vector $\hat{\bz}_m$ without any learnable weight matrices, the Jacobian matrix is diagonal. Specifically:
\begin{equation}
    \frac{\partial \bh_m}{\partial \hat{\bz}_m} = \text{diag}(\phi'(\hat{\bz}_m)),
\end{equation}
where $\phi'(t) = -2ct+4ct^3$ is the element-wise derivative of the polynomial mapping function. This term acts as a gate, scaling the gradient flow based on the magnitude of the attention output features.

\paragraph{Part (III): Attention Mechanism.}
Finally, we compute the derivative of the attention output $\hat{\bz}_m = \sum_{\alpha=1}^{m-1} \sigma_\alpha(\bw_m) \bx_\alpha$ with respect to the attention logits $w_{j,m}$. Using the quotient rule for the Softmax function $\sigma(\cdot)$, we obtain:
\begin{equation}
    \frac{\partial\hat{\bz}_m}{\partial w_{j,m}} = \sum_{\alpha=1}^{m-1} (\delta_{j\alpha} - \sigma_j(\bw_m))\sigma_\alpha(\bw_m) \be_\alpha = \sigma_j(\bw_m) (\be_j - \hat{\bz}_m).
\end{equation}

\paragraph{Combined Gradient.}
Substituting Parts (I), (II), and (III) back into Eq.~\eqref{eq:gradient_chain_rule}, we obtain the final expression for the gradient:
\begin{equation}
    \label{eq:combined_gradient}
    \frac{\partial \mathcal{L}}{\partial w_{j,m}} = - \underbrace{\left( \mathbf{c}_t - \mathbb{E}_{\hat{\by}_m}[\mathbf{c}] \right)^\top}_{\text{Output Error}} \cdot \underbrace{\text{diag}(\phi'(\hat{\bz}_m))}_{\text{Element-wise Jacobian}} \cdot \underbrace{\sigma_j(\bw_m)(\bx_j - \hat{\bz}_m)}_{\text{Attention Gradient}}.
\end{equation}
This equation elucidates how the error signal backpropagates directly through the non-linear activation $\phi'$, and interacts with the attention mechanism's sensitivity to the input token $\bx_j$.

\paragraph{Detailed Decomposition.}
To analyze the specific learning dynamics, we substitute the precise functional forms into the combined gradient in Eq.~\eqref{eq:gradient_chain_rule}. Specifically, the three components are instantiated as follows:
\begin{itemize}
    \item \textbf{Output Error:} The general error term concretizes to $- (\mathbf{c}_t - \sum_{k=1}^K p_k \mathbf{c}_k)^\top$, representing the residual between the target class prototype $\mathbf{c}_t$ and the current probability-weighted prototypes.
    \item \textbf{Element-wise Jacobian:} By applying the Taylor expansion to the activation derivative $\phi'$, this term is approximated as $-2c \cdot \text{diag}( \frac{1}{m} \sum_{\alpha=1}^{m-1} \be_\alpha ) + 4c \cdot \text{diag}(\hat{\bz}_m^{\odot 3})$, where the linear term dominates the high-order residuals.
    \item \textbf{Attention Gradient:} Under the assumption of uniform attention initialization (where attention weights $\sigma_j(\bw_m) \approx 1/m$), this term simplifies to $\frac{1}{m} ( \be_j - \frac{1}{m} \sum_{\beta=1}^{m-1} \be_\beta )$, capturing the deviation of the specific token $\be_j$ from the historical context.
\end{itemize}
Multiplying these three specific terms and fully expanding the resulting expression leads to a sum of eight distinct gradient components:

\begin{align}
   \frac{\partial \mathcal{L}}{\partial w_{j,m}} 
   = &\frac{2c}{m} (1 - p_t)\mathbf{c}_t^\top \cdot \text{diag}\left( \frac{1}{m} \sum_{\alpha=1}^{m-1} \be_\alpha \right) \cdot \be_j \label{eq:split_1} \\
   &- \frac{2c}{m} (1 - p_t)\mathbf{c}_t^\top \cdot \text{diag}\left( \frac{1}{m} \sum_{\alpha=1}^{m-1} \be_\alpha \right) \cdot \left( \frac{1}{m} \sum_{\beta=1}^{m-1} \be_\beta \right) \label{eq:split_2} \\
   &- \frac{2c}{m} \left( \sum_{k \neq c} p_k \mathbf{c}_k \right)^\top \cdot \text{diag}\left( \frac{1}{m} \sum_{\alpha=1}^{m-1} \be_\alpha \right) \cdot \be_j \label{eq:split_3} \\
   &+ \frac{2c}{m} \left( \sum_{k \neq c} p_k \mathbf{c}_k \right)^\top \cdot \text{diag}\left( \frac{1}{m} \sum_{\alpha=1}^{m-1} \be_\alpha \right) \cdot \left( \frac{1}{m} \sum_{\beta=1}^{m-1} \be_\beta \right) \label{eq:split_4} \\
   &- \frac{4c}{m} (1 - p_t)\mathbf{c}_t^\top \cdot \text{diag}\left(\hat{\bz}_m^{\odot 3}\right) \cdot \be_j \label{eq:split_5} \\
   &+ \frac{4c}{m} (1 - p_t)\mathbf{c}_t^\top \cdot \text{diag}\left(\hat{\bz}_m^{\odot 3}\right) \cdot \left( \frac{1}{m} \sum_{\beta=1}^{m-1} \be_\beta \right) \label{eq:split_6} \\
   &+ \frac{4c}{m} \left( \sum_{k \neq c} p_k \mathbf{c}_k \right)^\top \cdot \text{diag}\left(\hat{\bz}_m^{\odot 3}\right) \cdot \be_j \label{eq:split_7} \\
   &- \frac{4c}{m} \left( \sum_{k \neq c} p_k \mathbf{c}_k \right)^\top \cdot \text{diag}\left(\hat{\bz}_m^{\odot 3}\right) \cdot \left( \frac{1}{m} \sum_{\beta=1}^{m-1} \be_\beta \right). \label{eq:split_8}
\end{align}

Leveraging the binary symmetry where $\mathbf{c}_1 = -\mathbf{c}_2$, the output error term simplifies significantly: $\mathbf{c}_t - \sum p_k \mathbf{c}_k = 2(1 - p_t)\mathbf{c}_t$. This allows us to merge the target and non-target components in Eqs.~\eqref{eq:split_1}-\eqref{eq:split_8}. Consequently, the eight-term expansion collapses into four concise terms, consisting of the linear and residual interactions with the target prototype:

\begin{equation}
\label{eq:gradient_final}
\begin{aligned}
   \frac{\partial \mathcal{L}}{\partial w_{j,m}} 
   &= \frac{4c}{m^2} (1 - p_t) \sum_{\alpha=1}^{m-1} \tc{\mathbf{c}_t, \be_\alpha, \be_j} \\
   &\quad - \frac{4c}{m^3} (1 - p_t) \sum_{\alpha=1}^{m-1} \sum_{\beta=1}^{m-1} \tc{\mathbf{c}_t, \be_\alpha, \be_\beta} \\
   &\quad - \frac{2}{m} (1 - p_t) \tc{\mathbf{c}_t, \hat{\bz}_m^{\odot 3}, \be_j} \\
   &\quad + \frac{2}{m^2} (1 - p_t) \sum_{\beta=1}^{m-1} \tc{\mathbf{c}_t, \hat{\bz}_m^{\odot 3}, \be_\beta}.
\end{aligned}
\end{equation}

\noindent This unified representation highlights the structural symmetry between the primary attention-like interactions (terms 1 and 2) and their corresponding higher-order corrections (terms 3 and 4), where the global context acts as a centering mechanism across both orders of approximation.

To account for the optimization over a batch of $n$ samples, we extend the gradient formulation by averaging over the sample index $i$. Let the superscript $(i)$ denote the specific variables (embeddings, probabilities, and residuals) associated with the $i$-th sample, and let $t_i$ represent the target class for that sample. The aggregate gradient becomes:

\begin{align}
   \frac{\partial \mathcal{L}}{\partial w_{j,m}} 
   &= \frac{4c}{n m^2} \sum_{i=1}^n \left[ (1 - p_{t_i}^{(i)}) \sum_{\alpha=1}^{m-1} \tc{\mathbf{c}_{t_i}, \be_\alpha^{(i)}, \be_j^{(i)}} \right]  \label{eq:simple_1} \\
   &\quad - \frac{4c}{n m^3} \sum_{i=1}^n \left[ (1 - p_{t_i}^{(i)}) \sum_{\alpha=1}^{m-1} \sum_{\beta=1}^{m-1} \tc{\mathbf{c}_{t_i}, \be_\alpha^{(i)}, \be_\beta^{(i)}} \right]  \label{eq:simple_2} \\
   &\quad - \frac{8c}{n m} \sum_{i=1}^n \left[ (1 - p_{t_i}^{(i)}) \tc{\mathbf{c}_{t_i}, \hat{\bz}_m^{\odot 3}, \be_j^{(i)}} \right]  \label{eq:simple_3} \\
   &\quad + \frac{8c}{n m^2} \sum_{i=1}^n \left[ (1 - p_{t_i}^{(i)}) \sum_{\beta=1}^{m-1} \tc{\mathbf{c}_{t_i}, \hat{\bz}_m^{\odot 3}, \be_\beta^{(i)}} \right]. \label{eq:simple_4}
\end{align}

\noindent Here, the outer summation $\frac{1}{n} \sum_{i=1}^n$ averages the contribution of each sample. For every sample $i$, the gradient is weighted by the model's uncertainty $(1 - p_{t_i}^{(i)})$, ensuring that well-classified samples contribute less to the parameter update, while the structural terms (interactions between target prototypes $\mathbf{c}_{t_i}$, history $\be_\alpha^{(i)}$, and current token $\be_j^{(i)}$) drive the learning process.

\textbf{Computing interaction strengths.}

We now analyze the first two terms, which represent the low-order interaction contributions to the gradient. We rely directly on the definitions of the signal set $\mathcal{S}_{2,m}$ and the noise set $\mathcal{I}_{2,m}$ established previously.

Based on Lemma \ref{thm:kappa}, the sample average of the contraction for any index tuple $(a,b)$ concentrates based on its membership in these sets:
\begin{equation}
\frac{1}{n}\sum_{i=1}^n \tc{\mathbf{c}_{t_i}, \be_a^{(i)}, \be_b^{(i)}} = 
\begin{cases} 
C_{\text{signal}}^{(2)} & \text{if } (a,b) \in \mathcal{S}_{2,m}, \\
\text{noise} \in [-\kappa, \kappa] & \text{if } (a,b) \in \mathcal{I}_{2,m},
\end{cases}
\end{equation}
where $\kappa = O(\sqrt{\frac{\log m}{n}})$ represents the noise floor.

1. Analyzing the inner sum of \eqref{eq:simple_1}:
Consider the term $\sum_{\alpha=1}^{m-1} \tc{\mathbf{c}_{t_i}, \be_\alpha^{(i)}, \be_j^{(i)}}$. We classify the contribution based on whether the fixed index $j$ can form a valid signal tuple with any $\alpha$.

\begin{itemize}
   \item \textbf{Case: $j$ is a relevant feature.} 
   If $j$ is part of the true parity support, there exists exactly one index $\alpha'$ such that the pair $(\alpha', j) \in \mathcal{S}_{2,m}$ (representing the complementary feature in the parity pair). For all other $\alpha \neq \alpha'$, the pair $(\alpha, j) \in \mathcal{I}_{2,m}$.
   
   \item \textbf{Case: $j$ is an irrelevant feature.} 
   If $j$ is not part of the support, then for all $\alpha \in [m-1]$, the pair $(\alpha, j)$ never matches the target parity. Thus, $(\alpha, j) \in \mathcal{I}_{2,m}$ for the entire summation.
\end{itemize}

The term $(1 - p_{t_i}^{(i)})$ acts as a scalar weight reflecting the model's uncertainty on sample $i$. Since this weight appears as a common factor for the gradient components of all indices $j$, it modulates the global magnitude but preserves the \textbf{relative structural difference} between relevant and irrelevant features. Consequently, for the purpose of asymptotic signal-to-noise analysis, we focus on the structural component of the summation (omitting the scalar $(1 - p_{t_i}^{(i)})$):

\begin{align}
\frac{1}{n} \sum_{i=1}^n \sum_{\alpha=1}^{m-1} \tc{\mathbf{c}_{t_i}, \be_\alpha^{(i)}, \be_j^{(i)}} 
&= \begin{cases}
C_{\text{signal}}^{(2)} + (m-2)\kappa & \text{if } \exists \alpha' : (\alpha', j) \in \mathcal{S}_{2,m}, \\
(m-1)\kappa & \text{otherwise.}
\end{cases} \label{eq:term1_bound}
\end{align}

2. Analyzing the inner sum of \eqref{eq:simple_2}:
This term involves a double summation over all pairs $(\alpha, \beta) \in [m-1] \times [m-1]$. This global term is independent of $j$.

\begin{itemize}
   \item The sum accumulates $C_{\text{signal}}^{(2)}$ for every tuple $(\alpha, \beta)$ that belongs to $\mathcal{S}_{2,m}$.
   
   \item Since the order of indices matters in the tensor contraction but parity is symmetric, $\mathcal{S}_{2,m}$ contains all permutations of the support indices. For a degree-2 parity, $|\mathcal{S}_{2,m}| = 2$.
   
   \item All other $(m-1)^2 - 2$ tuples belong to $\mathcal{I}_{2,m}$.
\end{itemize}

Thus, the bound is:
\begin{align}
\frac{1}{n} \sum_{i=1}^n \sum_{\alpha, \beta} \tc{\mathbf{c}_{t_i}, \be_\alpha^{(i)}, \be_\beta^{(i)}} 
&= |\mathcal{S}_{2,m}| \cdot C_{\text{signal}}^{(2)} + O(m^2 \kappa) \nonumber \\
&= 2C_{\text{signal}}^{(2)} + O(m^2 \kappa). \label{eq:term2_bound}
\end{align}

3. Combining the terms:

Substituting \eqref{eq:term1_bound} and \eqref{eq:term2_bound} back into the gradient expressions, and assuming $d < m \le 2d$, we unify the analysis by defining a generalized membership indicator dependent on the interaction order $r$.

Let $\mathbb{I}_j^{(r)}$ denote whether feature $j$ is an active component in the ground-truth signal set of order $r$, denoted $\mathcal{S}_{r,m}$:
\begin{equation}
\mathbb{I}_j^{(r)} := \mathbb{I}\left( \exists \, (i_1, \dots, i_r) \in \mathcal{S}_{r,m} \text{ such that } j \in \{i_1, \dots, i_r\} \right).
\end{equation}
This indicator effectively acts as a selector: $\mathbb{I}_j^{(r)} = 1$ if feature $j$ contributes to an $r$-th order interaction, and $0$ otherwise.

For our specific case where the leading interaction is second-order ($r=2$), the combined gradient expression becomes:

\begin{align}
\eqref{eq:simple_1} + \eqref{eq:simple_2} 
&\approx -\frac{4c}{m^2} \left( C_{\text{signal}}^{(2)} \cdot \mathbb{I}_j^{(2)} + O(m\kappa) \right) + \frac{4c}{m^3} \left( 2C_{\text{signal}}^{(2)} + O(m^2\kappa) \right) \\
&= -\frac{4c}{m^2} C_{\text{signal}}^{(2)} \cdot \mathbb{I}_j^{(2)} + \underbrace{\frac{8c}{m^3}C_{\text{signal}}^{(2)}}_{O(m^{-3})} + \underbrace{O\left(\frac{\kappa}{m}\right)}_{\text{noise}} \\
&= \begin{cases}
-\Theta\left( m^{-2} \right) \cdot C_{\text{signal}}^{(2)} & \text{if } \mathbb{I}_j^{(2)} = 1, \\
O\left( m^{-3} \right) \cdot C_{\text{signal}}^{(2)} + O\left( m^{-1}\kappa \right) & \text{if } \mathbb{I}_j^{(2)} = 0.
\end{cases}
\end{align}

4. Analyzing the inner sum of \eqref{eq:simple_3}:
This term involves a triple summation over indices $(k_1, k_2, k_3) \in [m]^3$ with one index $j$ fixed. We examine the tuple formed by $(k_1, k_2, k_3, j)$.

\begin{itemize}
    \item Case A (Relevant Feature): If $j$ is part of the ground-truth support set (i.e., $\mathbb{I}_j^{(4)} = 1$), then for the tuple to belong to $\mathcal{S}_{4,m}$, the indices $(k_1, k_2, k_3)$ must be a permutation of the remaining 3 support indices. The number of such permutations is $(4-1)! = 3! = 6$.
  
    \item Case B (Irrelevant Feature): If $j$ is not in the support set (i.e., $\mathbb{I}_j^{(4)} = 0$), no choice of $(k_1, k_2, k_3)$ can form a valid support tuple. The signal contribution is 0.
  
    \item The total number of terms in the summation is $m^3$.
\end{itemize}

The aggregated contribution is:
\begin{align}
\frac{1}{n} \sum_{i=1}^n \sum_{k_1, k_2, k_3} \tc{\mathbf{c}_{t_i}, \be_{k_1}^{(i)}, \be_{k_2}^{(i)}, \be_{k_3}^{(i)}, \be_j^{(i)}} 
&= \begin{cases}
6 C_{\text{signal}}^{(4)} + O(m^3 \kappa) & \text{if } \mathbb{I}_j^{(4)} = 1, \\
O(m^3 \kappa) & \text{if } \mathbb{I}_j^{(4)} = 0.
\end{cases} \label{eq:term3_bound}
\end{align}

5. Analyzing the inner sum of \eqref{eq:simple_4}:
This term involves a quadruple summation over all indices $(\beta, k_1, k_2, k_3)$. This global term is independent of $j$.

\begin{itemize}
   \item The sum accumulates $C_{\text{signal}}^{(4)}$ for every tuple $(k_1, k_2, k_3, \beta)$ that belongs to $\mathcal{S}_{4,m}$.
 
   \item Since the sum covers all possible combinations, it includes all valid permutations of the support set. For a degree-4 parity, $|\mathcal{S}_{4,m}| = 4! = 24$.
 
   \item The total number of terms in the summation is approximately $m^4$.
\end{itemize}

Thus, the bound is:
\begin{align}
\frac{1}{n} \sum_{i=1}^n \sum_{\beta, k_1, k_2, k_3} \tc{\mathbf{c}_{t_i}, \be_{k_1}^{(i)}, \be_{k_2}^{(i)}, \be_{k_3}^{(i)}, \be_\beta^{(i)}} 
&= 24 C_{\text{signal}}^{(4)} + O(m^4 \kappa). \label{eq:term4_bound}
\end{align}

6. Combining the 4th-order terms:

Substituting \eqref{eq:term3_bound} and \eqref{eq:term4_bound} back into the simplified gradient expressions:

\begin{align}
\eqref{eq:simple_3} + \eqref{eq:simple_4} 
&\approx -\frac{8c}{m^4} \left( 6 C_{\text{signal}}^{(4)} \cdot \mathbb{I}_j^{(4)} + O(m^3\kappa) \right) + \frac{8c}{m^5} \left( 24 C_{\text{signal}}^{(4)} + O(m^4\kappa) \right) \\
&= -\frac{48c}{m^4} C_{\text{signal}}^{(4)} \cdot \mathbb{I}_j^{(4)} + \underbrace{\frac{192c}{m^5}C_{\text{signal}}^{(4)}}_{O(m^{-5})} + \underbrace{O\left(\frac{\kappa}{m}\right)}_{\text{noise}} \\
&= \begin{cases}
-\Theta\left( m^{-4} \right) \cdot C_{\text{signal}}^{(4)} & \text{if } \mathbb{I}_j^{(4)} = 1, \\
O\left( m^{-5} \right) \cdot C_{\text{signal}}^{(4)} + O\left( m^{-1}\kappa \right) & \text{if } \mathbb{I}_j^{(4)} = 0.
\end{cases}
\end{align}

\textbf{Combined Gradient Analysis and Learning Dynamics}

Based on the detailed expansion of the second-order and fourth-order terms, we derive the total approximate gradient expectation. By simplifying the specific coefficients into asymptotic bounds, we highlight the scaling relationships with respect to the sequence length $m$.

\textbf{Total Gradient Approximation}

For any feature index $j$, the expected gradient can be expressed as a superposition of signal components (from relevant interaction orders) and noise/bias components. The total aggregated gradient is given by:

\begin{align}
\frac{\partial \mathcal{L}}{\partial w_{j,m}} &= 
\underbrace{-\Theta(m^{-2}) \cdot \mathbb{I}_j^{(2)}}_{\text{2nd-order Signal}} 
+ \underbrace{\Theta(m^{-3})}_{\text{2nd-order Bias}} \nonumber \\
&\quad 
\underbrace{-\Theta(m^{-4}) \cdot \mathbb{I}_j^{(4)}}_{\text{4th-order Signal}} 
+ \underbrace{\Theta(m^{-5})}_{\text{4th-order Bias}} 
+ \underbrace{O\left(\frac{\kappa}{m}\right)}_{\text{Noise}},
\label{eq:total_gradient_theta}
\end{align}

where $\mathbb{I}_j^{(r)} \in \{0, 1\}$ is the indicator function, which equals $1$ if and only if feature $j$ belongs to the ground-truth support set $\mathcal{S}_{r,m}$ of order $r$.

\subsection{Generalization to Arbitrary Activation Functions}
\label{sec:general_activation}
While the previous analysis utilized a specific quartic polynomial to demonstrate the emergence of second and fourth-order interaction signals, specific functional forms restrict the generality of the findings. To provide a comprehensive understanding of the Transformer's learning dynamics, we extend this framework to general non-linearities.
In this section, we generalize our derivation to arbitrary smooth activation functions using Taylor series expansions. This allows us to strictly quantify the signal intensity for interaction terms of any order $r$.
\textbf{Expansion Point Justification}:
The validity of the perturbative analysis depends on expanding the function around the operating point of the hidden states at initialization. Given the symmetric initialization of embeddings $\mathbf{e} \sim \mathcal{U}(-1, 1)$ and the linearity of the attention aggregation, the pre-activation feature $\hat{\bz}_m$ satisfies $\mathbb{E}[\hat{\bz}_m] = \mathbf{0}$. Consequently, the statistical behavior of the gradient is dictated by the local geometry of the activation function around zero. We therefore employ the Maclaurin series (Taylor expansion at $z=0$).
\textbf{Formal Definition}:
Let $\phi: \mathbb{R} \to \mathbb{R}$ be a smooth activation function. We express it as:
\begin{equation}
\phi(z) = \sum_{k=0}^{\infty} \frac{\phi^{(k)}(0)}{k!} z^k.
\end{equation}
Recall that the gradient flow is modulated by the derivative $\phi'(\hat{\bz}_m)$. Differentiating the series, we define the gradient scaling function as:
\begin{equation}
\label{eqn:phi_prime_expansion}
\phi'(z) = \sum_{k=1}^{\infty} \frac{\phi^{(k)}(0)}{(k-1)!} z^{k-1} = \sum_{r=1}^{\infty} \gamma_r z^{r-1},
\end{equation}
where $\gamma_r \triangleq \frac{\phi^{(r)}(0)}{(r-1)!}$ is the \textit{Interaction Coefficient} for order $r$.

\textbf{Generalized Gradient Decomposition}:
Substituting the series expansion of $\phi'(\cdot)$ into the chain rule, we decompose the gradient into a weighted sum of tensor contractions. This formulation reveals that the total gradient is a superposition of interaction terms across all orders $r$, where the $r$-th term is explicitly governed by the derivative coefficient $\gamma_r$:

\begin{align}
    \frac{\partial \mathcal{L}}{\partial w_{j,m}} 
    = \sum_{r=1}^{\infty} \gamma_r \cdot \Bigg\{ \nonumber 
    &\underbrace{ \frac{1}{n} \sum_{i=1}^n \left[ (1 - p_{t_i}^{(i)}) \cdot \tc{\mathbf{c}_{t_i}, (\hat{\bz}_m^{(i)})^{\odot (r-1)}, \be_j^{(i)}} \right] }_{\text{Term A: Current Token Interaction (Order } r \text{)}} \\
    - &\underbrace{ \frac{1}{n m} \sum_{i=1}^n \left[ (1 - p_{t_i}^{(i)}) \sum_{\beta=1}^{m} \tc{\mathbf{c}_{t_i}, (\hat{\bz}_m^{(i)})^{\odot (r-1)}, \be_\beta^{(i)}} \right] }_{\text{Term B: Context Bias Term (Order } r \text{)}} \Bigg\} 
\end{align}

This expansion highlights a hierarchical dependency on the sequence length $m$. Specifically, the interaction involving the current token $\be_j$ (Term A) scales as $O(m^{-r})$, while the context bias (Term B) is further suppressed by a factor of $1/m$ due to the averaging over the history.

Generalizing the asymptotic behavior observed in the quadratic and quartic cases, we can express the total gradient as a sum of signal and bias terms for each order $r$. The final generalized scaling law is:

\begin{equation}
  \frac{\partial \mathcal{L}}{\partial w_{j,m}} = 
  \sum_{r=1}^{\infty} \gamma_r \cdot \Bigg[
  \underbrace{-\Theta(m^{-r}) \cdot \mathbb{I}_j^{(r)} \cdot C_{\mathrm{signal}}^{(r)} }_{\text{Order-}r \text{ Signal}} 
  + \underbrace{\Theta(m^{-(r+1)}) \cdot C_{\mathrm{signal}}^{(r)} }_{\text{Order-}r \text{ Bias}} 
  \Bigg]
  + \underbrace{O\left( \frac{\kappa}{m} \right)}_{\text{Noise}}.
\end{equation}

where $\mathbb{I}_j^{(r)}$ represents the structured interaction tensor of order $r$ specific to token $j$. This confirms that lower-order terms (small $r$) dominate the learning dynamics, while higher-order terms decay exponentially with the complexity of the interaction.

\textbf{Conclusion and Implications}

The analytical result in \eqref{eq:total_gradient_theta} reveals the intrinsic learning dynamics of the model. We summarize the key findings as follows:

First, the model demonstrates a discriminative learning mechanism that effectively distinguishes between relevant and irrelevant features. For features within the support set (where $\mathbb{I}_j^{(r)}=1$), the gradient is dominated by a signal term scaling as $-\Theta(m^{-r})$, which provides a deterministic drive for the weights to capture the interaction. Conversely, for irrelevant features, this signal vanishes, leaving the gradient driven solely by higher-order bias terms and stochastic noise. This distinction allows the model to selectively amplify features in the support set while suppressing non-informative inputs.

Second, there exists a clear hierarchy in the learning difficulty of different interaction orders, governed by the sequence length $m$. The effective signal strength decays exponentially with the interaction order, dropping from $\Theta(m^{-2})$ for second-order terms to $\Theta(m^{-4})$ for fourth-order terms. Although higher-order terms are theoretically learnable, their significantly smaller gradient magnitudes compared to lower-order terms imply that the model exhibits a spectral bias, prioritizing the learning of lower-order interactions before fitting the more complex, high-order dependencies.

Third, the sample size $n$ plays a critical role in noise suppression, acting as the fundamental bottleneck for learning high-order interactions. Since the fourth-order signal intensity $\Theta(m^{-4})$ is extremely weak, the noise term $O(\kappa/m)$—where $\kappa$ is typically inversely proportional to $\sqrt{n}$—must be suppressed to a level lower than the signal. Consequently, learning high-order logic imposes a strict requirement for a sufficiently large sample size to reduce statistical variance; otherwise, the faint high-order signal will be overwhelmed by the noise derived from lower-order couplings, preventing the model from converging to the correct solution.

\section{Discussion on two latent token paradigms}
\label{sec:two_token_the}

Following the proof in Section \ref{sec:final_proof}, we need only discuss the capacity of $\bc_s$ to construct interaction terms of any order, either with the inputs alone or jointly with the output.

\subsection{Analysis of Implicit CoT: Independent Learnable Parameters}
\label{sec:analysis_independent_token}

We examine the value of the interaction term formed by an arbitrary subset of input tokens indexed by $J=(j_1, \dots, j_r)$ and the independent latent parameter $\bc_s$. The contraction is given by:
\begin{equation}
    T_{J, \bc_s} = \tc{x_{j_1} \bv, \dots, x_{j_r} \bv, \bc_s}.
\end{equation}
Exploiting the multilinear property of the tensor contraction, we factor out the scalar signs of the inputs:
\begin{equation}
    T_{J, \bc_s} = \left( \prod_{k=1}^r x_{j_k} \right) \cdot \tc{\bv, \dots, \bv, \bc_s}.
\end{equation}
Here, the geometric component $C_{\bc} := \tc{\bv, \dots, \bv, \bc_s}$ is a constant scalar determined solely by the initialization of $\bc_s$ and the base vector $\bv$. It does not vary across samples.

The statistical behavior of this term is governed by the product of the input signs $\chi_J = \prod_{k=1}^r x_{j_k}$. Since each input $x \in \{+1, -1\}$ is drawn from an independent uniform distribution (Rademacher distribution):
\begin{itemize}
    \item The product $\chi_J$ is also uniformly distributed over $\{+1, -1\}$.
    \item Consequently, for a single sample, the value of the contraction oscillates symmetrically:
    \begin{equation}
        T_{J, \bc_s}^{(i)} \sim \text{Uniform}(\{+C_{\bc}, -C_{\bc}\}).
    \end{equation}
\end{itemize}

Following the statistical analysis above, we observe that any interaction term involving the independent parameter $\bc_s$ is functionally equivalent to an irrelevant term ($\mathcal{I}_{r,m}$) in the standard expansion. Since $\bc_s$ contains no instance-specific sign information, it cannot cancel out the parity of the input subset and target. Consequently, the resulting contraction is a sum of zero-mean random variables.

By invoking the same concentration of measure arguments used in Lemma \ref{thm:kappa} (Hoeffding's inequality applied to bounded zero-mean variables), we formally bound the maximum contribution of these latent interactions.

\begin{lemma}[Concentration of Latent Interactions]\label{thm:kappa_latent}
Consider the set of interaction tuples $\mathcal{J}$ involving the independent latent token $\bc_s$. Under the assumption of i.i.d. uniform inputs over $\{\pm 1\}$, for any confidence level $1-p$, the maximum magnitude of the aggregated interaction strength is bounded by:
\begin{equation}
    \max_{J \in \mathcal{J}} \left| \frac{1}{n} \sum_{i=1}^n \tc{x_{j_1}^{(i)} \bv, \dots, x_{j_r}^{(i)} \bv, \bc_s} \right| \le \kappa_s,
\end{equation}
where $\kappa_s := C_{\bc} \cdot \sqrt{\frac{2}{n} \log \frac{2|\mathcal{J}|}{p}}$ represents the noise floor, and $C_{\bc}$ denotes the geometric magnitude of the contraction involving $\bc_s$.
\end{lemma}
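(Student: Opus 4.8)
The plan is to replay the proof of Lemma~\ref{thm:kappa} essentially line for line, with the sample-independent constant $C_{\bc} := \tc{\bv,\dots,\bv,\bc_s}$ taking the place of $C_{\text{signal}}$. The first step is to exploit multilinearity of the contraction to strip off the input signs: for a fixed tuple $J = (j_1,\dots,j_r)$,
\[
  \tc{x_{j_1}^{(i)}\bv,\dots,x_{j_r}^{(i)}\bv,\bc_s} \;=\; \Big(\textstyle\prod_{k=1}^r x_{j_k}^{(i)}\Big)\,\tc{\bv,\dots,\bv,\bc_s} \;=\; \chi_J^{(i)}\,C_{\bc},
\]
where $\chi_J^{(i)} := \prod_{k=1}^r x_{j_k}^{(i)}$. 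Since the coordinates $x_j^{(i)}$ are i.i.d.\ Rademacher and $J$ indexes a non-trivial parity, $\chi_J^{(i)}$ is itself a zero-mean Rademacher variable bounded in $[-1,1]$; the only role of $\bc_s$ is to contribute the deterministic scalar $C_{\bc}$, so the whole contraction behaves exactly like an element of the irrelevant set $\mathcal{I}_{r,m}$ from Section~\ref{sec:parity_invariant_correlation}.

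Next I would apply Hoeffding's inequality to the empirical average, which is a mean of $n$ i.i.d.\ zero-mean variables bounded in $[-C_{\bc},C_{\bc}]$:
\[
  \Pr\!\Big(\big|\tfrac1n\textstyle\sum_{i=1}^n \tc{x_{j_1}^{(i)}\bv,\dots,x_{j_r}^{(i)}\bv,\bc_s}\big|\ge \kappa_s\Big)\;\le\;2\exp\!\Big(-\tfrac{n\kappa_s^2}{2C_{\bc}^2}\Big).
\]
Then a union bound over the $|\mathcal{J}|$ candidate tuples controls the maximum, giving failure probability at most $2|\mathcal{J}|\exp(-n\kappa_s^2/(2C_{\bc}^2))$; equating this to $p$ and solving for $\kappa_s$ produces the stated noise floor $\kappa_s = C_{\bc}\sqrt{\frac{2}{n}\log\frac{2|\mathcal{J}|}{p}}$.

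The calculation is routine once Lemma~\ref{thm:kappa} is in hand, so the only delicate point --- and the actual content of the lemma --- is the justification that $\chi_J^{(i)}$ is genuinely mean-zero. This is precisely where the defining property of the independent latent token enters: because $\bc_s$ is a global parameter with no dependence on the instance, it carries no sign information and therefore cannot cancel the input parity the way a target embedding $y_m\bu$ cancels it for tuples in the aligned set $\mathcal{S}_{r,m}$. To be fully rigorous one would restrict $\mathcal{J}$ to tuples whose index multiset is not evenly covered (so that $\chi_J\not\equiv 1$); the degenerate tuples left out are anyway inert for the downstream argument that $\bc_s$ cannot bridge the reasoning gap, since they convey no instance-specific content. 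I do not anticipate any genuine obstacle beyond bookkeeping the constant $C_{\bc}$ consistently with the earlier derivation.
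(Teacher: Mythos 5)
Your proposal matches the paper's argument essentially line for line: the paper also factors out the input signs via multilinearity to isolate the deterministic scalar $C_{\bc}$, observes that $\chi_J^{(i)}$ is a zero-mean Rademacher variable because $\bc_s$ carries no instance-specific sign information, and then simply invokes the Hoeffding-plus-union-bound argument from Lemma~\ref{thm:kappa} with $C_{\bc}$ in place of $C_{\text{signal}}$. Your added caveat that $\mathcal{J}$ should exclude degenerate tuples with evenly-covered index multisets (for which $\chi_J\equiv 1$) is a small but genuine tightening of a point the paper leaves implicit.
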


\textbf{Implication.} This lemma establishes that interactions via the static token $\bc_s$ are mathematically indistinguishable from background noise. Unlike explicit intermediate tokens, which create a deterministic signal $C_{\text{signal}}$, the independent latent token fails to reduce the effective interaction order.

\subsection{Analysis of Implicit CoT: Dynamic Generation via Internal States}

To rigorously determine the signal-to-noise ratio, we analyze the joint interaction between the specific input subset, the latent state $\bh_m$, and the target output embedding.

\textbf{1. Expansion of the Joint Interaction Tensor}

Let $R = \{j_1, \dots, j_r\}$ be the set of indices for the relevant inputs. The target label is the parity $y_{m+1} = \prod_{j \in R} x_j$. We define the \textit{Joint Interaction Logic} $\Psi_R$ as the contraction of the input query, the latent thought, and the target:
\begin{equation}
    \Psi_R = \tc{ x_{j_1}\bv, \dots, x_{j_r}\bv, \bh_m, y_{m+1}\bu }.
\end{equation}
Substituting the Taylor expansion $\bh_m = \sum_{\ell=0}^{\infty} c_\ell \left( \frac{1}{m} \sum_{k=1}^m x_k \bv \right)^{\odot \ell}$, we expand the tensor into a sum over all possible interaction orders:
\begin{equation}
\label{eq:expansion_master}
    \Psi_R = \sum_{\ell=0}^{\infty} \frac{c_\ell}{m^\ell} \sum_{k_1=1}^m \dots \sum_{k_\ell=1}^m \underbrace{ \tc{ x_{j_1}\bv, \dots, x_{j_r}\bv, x_{k_1}\bv, \dots, x_{k_\ell}\bv,  y_{m+1} \bu } }_{\text{Elementary Interaction Term } \mathcal{T}}.
\end{equation}

Here, strictly speaking, $\bh_m$ cannot assist the explicit context in forming a \textit{stable} signal for the target logic. Since $\bh_m$ aggregates the global information of the entire sequence (as a dense sum), it violates the ``minimal subset'' constraint required for the definition of an $R$-order parity interaction, effectively acting as a source of interference rather than a precise guide.

However, the expansion reveals that $\bh_m$ inherently contains high-order terms of the aggregated context $\hat{\bz}_m$ (where $\ell \ge r$). This theoretically augments the model's capacity to capture complex, higher-order interaction signals that linear components cannot represent. Nevertheless, mining these valid high-order signals from the vast combinatorial space incurs a prohibitive training cost. The model requires an exponential number of samples and extended training duration to filter out the noise and converge on the correct high-order logic.

\section{Data Sample}
\label{sec:data_example}

Table A.1 presents a complete sample from the \textbf{NatBool-DAG} dataset. This instance features a \textbf{reasoning depth of 7} within the \textit{Medical Diagnosis} theme. The content within the box below illustrates the exact textual format fed to the model, alongside the rigorous step-by-step derivation used for supervision. 
\begin{table}[h!]
    \centering
    \caption{NatBool-DAG Sample Instance (Theme: Medical Diagnosis, Depth: 7)}
    \label{tab:sample_data}
    \begin{databox}[NatBool-DAG Sample ID: 0c63eebe-9289-4632]
    \small
    \textbf{[METADATA]} \\
    \textbf{Theme:} Medical Diagnosis \quad|\quad \textbf{Reasoning Steps:} 6 \quad|\quad \textbf{Total Nodes:} 13
    
    \noindent\rule{\textwidth}{0.4pt}
    
    \textbf{[INSTRUCTION]} \\
    Based on the observed data and system rules, determine the status of 'Urgent Surgery Needed'. Is it True or False?
    
    \vspace{0.5em}
    \textbf{[INPUT CONTEXT]} \\
    === SCENARIO: Medical Diagnosis ===
    
    \textit{--- OBSERVED DATA ---} \\
    {[Lab Report]} 'Blood Test A Type-2': NOT DETECTED/NORMAL. \\
    {[Lab Report]} 'Critical Genetic Marker': DETECTED/HIGH. \\
    {[Lab Report]} 'Suppressed Blood Test A': DETECTED/HIGH. \\
    {[Lab Report]} 'Routine Previous History': NOT DETECTED/NORMAL. \\
    {[Lab Report]} 'MRI Scan Variant-X': NOT DETECTED/NORMAL. \\
    {[Lab Report]} 'Previous History': NOT DETECTED/NORMAL. \\
    {[Lab Report]} 'Elevated Patient Fever': NOT DETECTED/NORMAL.
    
    \textit{--- SYSTEM RULES ---} \\
    - Symptom Check: Suspect 'Urgent Surgery Needed' if patient shows 'Viral Load Type-1' OR 'Critical Enzyme Level'. \\
    - Differential: 'Viral Load Type-1' is indicated if 'Cell Regeneration Type-2' contradicts 'Cell Regeneration Type-2'. \\
    - Protocol: Confirm 'Cell Regeneration Type-2' only if 'Immune Response Type-1' AND 'Immune Response Type-1' are present. \\
    - Protocol: Confirm 'Secondary Viral Load' only if 'Critical Genetic Marker' AND 'Previous History' are present. \\
    - Exclusion: 'Abnormal Cell Regeneration' is ruled out (False) only if both 'Secondary Viral Load' and 'Secondary Viral Load' are True. \\
    - Differential: 'Critical Enzyme Level' is indicated if 'Cell Regeneration Type-2' contradicts 'Cell Regeneration Type-2'. \\
    - Differential: 'Immune Response Type-1' is indicated if 'Neural Activity' contradicts 'Abnormal Cell Regeneration'. \\
    - Exclusion: 'Neural Activity' is ruled out (False) only if both 'Secondary Viral Load' and 'Secondary Viral Load' are True.
    
    \noindent\rule{\textwidth}{0.4pt}
    
    \textbf{[GROUND TRUTH CHAIN-OF-THOUGHT]} \\
    1. Check status of 'Critical Genetic Marker': The logs indicate it is True. \\
    2. Check status of 'Previous History': The logs indicate it is False. \\
    Analyzing 'Secondary Viral Load': \\
    \hspace*{1em} - Requires: Critical Genetic Marker (True) AND Previous History (False) \\
    \hspace*{1em} - Logic: AND logic evaluates to False. $\rightarrow$ 'Secondary Viral Load' is False. \\
    Analyzing 'Neural Activity': \\
    \hspace*{1em} - Requires: Secondary Viral Load (False) NAND Secondary Viral Load (False) \\
    \hspace*{1em} - Logic: NAND logic evaluates to True. $\rightarrow$ 'Neural Activity' is True. \\
    Analyzing 'Abnormal Cell Regeneration': \\
    \hspace*{1em} - Requires: Secondary Viral Load (False) NAND Secondary Viral Load (False) \\
    \hspace*{1em} - Logic: NAND logic evaluates to True. $\rightarrow$ 'Abnormal Cell Regeneration' is True. \\
    Analyzing 'Immune Response Type-1': \\
    \hspace*{1em} - Requires: Neural Activity (True) XOR Abnormal Cell Regeneration (True) \\
    \hspace*{1em} - Logic: XOR logic evaluates to False. $\rightarrow$ 'Immune Response Type-1' is False. \\
    Analyzing 'Cell Regeneration Type-2': \\
    \hspace*{1em} - Requires: Immune Response Type-1 (False) AND Immune Response Type-1 (False) \\
    \hspace*{1em} - Logic: AND logic evaluates to False. $\rightarrow$ 'Cell Regeneration Type-2' is False. \\
    Analyzing 'Viral Load Type-1': \\
    \hspace*{1em} - Requires: Cell Regeneration Type-2 (False) XOR Cell Regeneration Type-2 (False) \\
    \hspace*{1em} - Logic: XOR logic evaluates to False. $\rightarrow$ 'Viral Load Type-1' is False. \\
    Analyzing 'Critical Enzyme Level': \\
    \hspace*{1em} - Requires: Cell Regeneration Type-2 (False) XOR Cell Regeneration Type-2 (False) \\
    \hspace*{1em} - Logic: XOR logic evaluates to False. $\rightarrow$ 'Critical Enzyme Level' is False. \\
    Analyzing 'Urgent Surgery Needed' (Target): \\
    \hspace*{1em} - Requires: Viral Load Type-1 (False) OR Critical Enzyme Level (False) \\
    \hspace*{1em} - Logic: OR logic evaluates to False. $\rightarrow$ 'Urgent Surgery Needed' is False.
    
    \vspace{0.5em}
    \textbf{Final Answer: False}

    \noindent\rule{\textwidth}{0.4pt}
    \textbf{[OUTPUT]} \\
    False \\

    \end{databox}
\end{table}

\subsection{Implementation Details}
\label{sec:experiments_details}
\paragraph{Training Setup.}
We perform \textbf{full-parameter fine-tuning} to ensure the model fully adapts to the latent reasoning mechanism. The training is conducted for $10$ epochs with a learning rate of $1 \times 10^{-5}$, utilizing FP16 mixed-precision optimization. The maximum sequence length is set to $2048$ tokens to accommodate the full context of logical reasoning paths. To ensure stable convergence, we configure the per-device batch size to $1$ with $8$ gradient accumulation steps.
\paragraph{Method Hyperparameters.}
For our proposed ALiCoT, the latent tokens are initialized as independent learnable parameters. We set the sequence length of these latent tokens to $20$. The coefficient for the alignment loss ($\lambda$ in Eq. \ref{eq:alignment_loss}) is set to $0.5$, balancing the primary generation task with the semantic alignment of the latent space.

\section{Main Experimental Results}
\label{sec:main_experimental_results}
We conduct a comprehensive evaluation of ALiCoT across diverse model architectures and scales, including Llama-3.2 (1B, 3B) and Qwen3 (0.6B, 4B). Crucially, for all evaluated methods, we enforce extreme compression, where the entire chain of intermediate reasoning steps is fully suppressed to test the absolute limits of implicit generation. Our analysis primarily focuses on the comparison against Imp. Base-1. While we report Imp. Base-2 for the Qwen3-0.6B model, it is excluded for larger models due to computational resource constraints. To ensure a fair and rigorously controlled comparison, all experiments utilize identical training datasets and iterations. The detailed results are presented in Table \ref{tab:refined_results}.

\begin{table*}[t]
    \centering
    \small 
    \setlength{\tabcolsep}{3pt} 
    \renewcommand{\arraystretch}{1.15}
    \caption{\textbf{Scalability Analysis (3--10 Hops).} Comparison of \textbf{ALiCoT} against baselines across model scales. \textbf{Explicit CoT} serves as the global reference with 100\% accuracy. Note that \textbf{Imp. Base-2} is only reported for Qwen3 (0.6B) as it failed to converge on larger models.}
    \label{tab:refined_results}
    
    \begin{tabular*}{\textwidth}{l @{\extracolsep{\fill}} l ccccccccc}
        \toprule
        \multirow{2}{*}{\textbf{Model Setting}} & \multirow{2}{*}{\textbf{Method}} & \multicolumn{8}{c}{\textbf{Accuracy (\%) by Hops}} & \textbf{Avg.} \\
        \cmidrule(lr){3-10} 
         & & \textbf{3} & \textbf{4} & \textbf{5} & \textbf{6} & \textbf{7} & \textbf{8} & \textbf{9} & \textbf{10} & \textbf{Acc} \\
        \midrule

        \textit{Reference} & Expl. CoT & 100.00 & 100.00 & 100.00 & 100.00 & 100.00 & 100.00 & 100.00 & 100.00 & 100.00 \\
        \midrule
        
        \multirow{2}{*}{\textbf{Llama-3.2 (1B)}} 
          & Imp. Base-1  & \textbf{68.18} & \textbf{69.57} & 71.01 & 68.77 & 66.20 & \textbf{66.67} & 72.05 & 63.84 & 68.50 \\
          & \textbf{ALiCoT} & 65.91 & 66.89 & \textbf{72.46} & \textbf{73.68} & \textbf{74.56} & 65.08 & \textbf{76.42} & \textbf{74.01} & \textbf{71.25} \\
        \cmidrule(lr){1-11} 
        
        \multirow{2}{*}{\textbf{Llama-3.2 (3B)}} 
          & Imp. Base-1  & 65.91 & 66.89 & 72.46 & 73.68 & 74.56 & 65.08 & 76.42 & 74.01 & 71.25 \\
          & \textbf{ALiCoT} & \textbf{93.64} & \textbf{88.96} & \textbf{84.42} & \textbf{88.07} & \textbf{82.58} & \textbf{85.71} & \textbf{89.52} & \textbf{81.36} & \textbf{86.85} \\

        \midrule
        
        \multirow{3}{*}{\textbf{Qwen3 (0.6B)}} 
          & Imp. Base-1  & 68.64 & 68.56 & 72.46 & 72.63 & 72.47 & 70.37 & 75.11 & 70.62 & 71.41 \\
          & Imp. Base-2  & 33.48 & 38.15 & 42.50 & 46.52 & 47.91 & 50.52 & 55.60 & 53.80 & 45.87 \\
          & \textbf{ALiCoT} & \textbf{87.27} & \textbf{84.28} & \textbf{82.25} & \textbf{85.26} & \textbf{82.23} & \textbf{82.01} & \textbf{84.72} & \textbf{83.62} & \textbf{83.94} \\
        \cmidrule(lr){1-11}
        
        \multirow{2}{*}{\textbf{Qwen3 (4B)}} 
          & Imp. Base-1  & 79.09 & 75.92 & 75.36 & 81.05 & 77.35 & 78.84 & 81.22 & 74.01 & 77.88 \\
          & \textbf{ALiCoT} & \textbf{100.00} & \textbf{97.99} & \textbf{96.74} & \textbf{93.68} & \textbf{92.68} & \textbf{92.59} & \textbf{92.58} & \textbf{92.66} & \textbf{95.01} \\
        \cmidrule(lr){1-11}
        
        \multirow{2}{*}{\textbf{Qwen3 (8B)}} 
          & Imp. Base-1  & 77.73 & 77.26 & 78.62 & 81.75 & 78.05 & 80.42 & 84.72 & 78.53 & 79.56 \\
          & \textbf{ALiCoT} & \textbf{100.00} & \textbf{96.66} & \textbf{95.29} & \textbf{96.49} & \textbf{92.33} & \textbf{92.59} & \textbf{87.77} & \textbf{92.09} & \textbf{94.34} \\
        
        \bottomrule
    \end{tabular*}
\end{table*}

\textbf{Superior Accuracy under Extreme Compression.}
ALiCoT consistently outperforms the baselines across all model architectures and scales, demonstrating remarkable resilience to the extreme compression of intermediate reasoning steps. While \textit{Imp. Base-1} struggles to maintain accuracy—particularly evident in the Qwen3-0.6B and Llama-3.2-3B settings where it lags significantly—ALiCoT achieves substantial gains. For instance, on Qwen3 (4B), ALiCoT improves the average accuracy by over \textbf{17\%} compared to Imp. Base-1 (95.01

\textbf{Robustness in Long-Reasoning.}
ALiCoT exhibits superior stability as the reasoning depth increases (from 3 to 10 hops). In the challenging 10-hop scenarios, where baseline performance typically degrades due to error accumulation in the implicit state, ALiCoT maintains high proficiency. Notably, in the Llama-3.2 (1B) setting, although the baseline performs competitively on shorter hops, ALiCoT surpasses it on longer chains (e.g., +10.17\% on Hop 10), indicating that our method effectively internalizes complex multi-step logic without suffering from "reasoning collapse."

\textbf{Impact of Latent Noise and Context Length.}
As demonstrated in Section \ref{sec:two_token_the}, autoregressive latent tokens (Imp. Base-2) introduce significantly higher noise levels compared to independent learnable parameters (Imp. Base-1 and ALiCoT). Given identical latent lengths, we observe divergent behaviors driven by this noise:

Imp. Base-1 (Low Noise): Due to the minimal interference from independent initialization, the model is not easily distracted. Consequently, it follows the standard complexity trend where performance is higher on simpler tasks (Short Hops) and naturally decays as reasoning complexity increases.

Imp. Base-2 (High Noise): The autoregressive noise creates a dependency on input context length for robustness.

\textit{Vulnerability in Short Contexts:} In simple tasks (e.g., Hop 3), the input context is sparse. Lacking sufficient information to "anchor" its attention, the model becomes highly susceptible to the noise generated by the latent tokens, leading to performance collapse (33.48
\textit{Robustness in Long Contexts:} Conversely, as the reasoning chain lengthens (e.g., Hop 10), the richer context provides stronger supervision signals. This extended information helps the model filter out latent noise, resulting in a counter-intuitive performance recovery (53.80
This contrast highlights that ALiCoT’s independent initialization is crucial for stabilizing inference, especially for tasks with varying context lengths.

\end{document}